\newcommand{\negativespace}{}
\theoremstyle{plain}
\newtheorem{theorem}{Theorem}[section]
\newtheorem{lemma}[theorem]{Lemma}
\newtheorem{corollary}[theorem]{Corollary}
\theoremstyle{definition}
\newtheorem{definition}[theorem]{Definition}
\theoremstyle{remark}
\newtheorem{remark}[theorem]{Remark}
\newcommand{\ourtitle}{DPO Unchained: Your Training Algorithm is Secretly Disentangled in\\ Human Choice Theory (and its Loss' Convexity is Dispensable)}
\newcommand{\ourrunningtitle}{DPO Unchained: Your Training Algorithm is Secretly Disentangled in Human Choice Theory}
\icmltitlerunning{\ourrunningtitle}
\begin{document}

\twocolumn[
  \icmltitle{\ourtitle}

  % It is OKAY to include author information, even for blind submissions: the
  % style file will automatically remove it for you unless you've provided
  % the [accepted] option to the icml2026 package.

  % List of affiliations: The first argument should be a (short) identifier you
  % will use later to specify author affiliations Academic affiliations
  % should list Department, University, City, Region, Country Industry
  % affiliations should list Company, City, Region, Country

  % You can specify symbols, otherwise they are numbered in order. Ideally, you
  % should not use this facility. Affiliations will be numbered in order of
  % appearance and this is the preferred way.
  \icmlsetsymbol{equal}{*}

\begin{icmlauthorlist}
\icmlauthor{Wenxuan Zhou}{gdm}
\icmlauthor{Shujian Zhang}{gdm}
\icmlauthor{Brice Magdalou}{cee}
\icmlauthor{John Lambert}{gdm}
\icmlauthor{Ehsan Amid}{fgdm}
\icmlauthor{Richard Nock}{goo}
\icmlauthor{Andrew Hard}{gdm}
\end{icmlauthorlist}

\icmlaffiliation{goo}{Google Research}
\icmlaffiliation{gdm}{Google DeepMind}
\icmlaffiliation{fgdm}{Work done while at Google DeepMind}
\icmlaffiliation{cee}{CEE-M, Univ Montpellier, CNRS, INRAE, Institut Agro}

  \icmlcorrespondingauthor{Richard Nock}{richardnock@google.com}

  % You may provide any keywords that you find helpful for describing your
  % paper; these are used to populate the "keywords" metadata in the PDF but
  % will not be shown in the document
  \icmlkeywords{Direct Preference Optimization, Proper Loss Functions, Stochastic Choice Theory}

  \vskip 0.3in
]

% this must go after the closing bracket ] following \twocolumn[ ...

% This command actually creates the footnote in the first column listing the
% affiliations and the copyright notice. The command takes one argument, which
% is text to display at the start of the footnote. The \icmlEqualContribution
% command is standard text for equal contribution. Remove it (just {}) if you
% do not need this facility.

% Use ONE of the following lines. DO NOT remove the command.
% If you have no special notice, KEEP empty braces:
\printAffiliationsAndNotice{}  % no special notice (required even if empty)
% Or, if applicable, use the standard equal contribution text:
% \printAffiliationsAndNotice{\icmlEqualContribution}

\begin{abstract}
 
Normative theories allow one to elicit key parts of a ML algorithm from first principles, which is crucial at a time of championed scrutiny for ML work. Direct Preference Optimization (DPO) cleverly bypasses reward modeling by making an explicit link with a specific normative model of human choice. Our paper elevates this connection to the full generality of DPO's normative framework. Getting there requires reworking social choice theory's textbook path for a better RLHF/ML fit. It elevates the connection to a remarkably broad viewpoint on preference optimization, considering the current panorama of DPO follow-ups. It also unveils unexpected riches for ML, chief among which the support for \textit{non-convex} losses, the fact that \textit{any} compliant ML analytical choice can be embedded with \textit{any} human choice model, and a normative framework's umbrella wide enough to safeguard DPO's \textit{extensions} (margins, length correction, ...). A \textit{toy} experiment ``far away'' from the DPO crowd is given.
\end{abstract}

\section{Introduction}

At a time when machine learning (ML) faces increasing scrutiny on its technical contributions \citep{DBLP:journals/corr/abs-2506-19882}, normative theories constitute a solid bedrock of ML. For example, they allow one to elicit the starting point of any ML algorithm, the objective function to be optimized, from first principles. Bregman divergences \citep{rwCB,JMLR:v17:14-294}, $f$-divergences \citep{cWL}, optimal transport distances \citep{DBLP:journals/ftml/PeyreC19,vOT} all follow from first principles built on two pillars: analytical (mathematical or physics grounding) and rational (modeling human behavior). ML is a poster child of a related phenomenon whereby elegant and very successful breakthroughs exploit a \textit{minuscule} fraction of the normative freedom while (i) colossal pressure is put for their fast generalization, (ii) challenges mount on their normative bedrock, yet (iii) the full extent of the normative bedrock and the ML freedom it authorizes are completely unknown! 

Case in point: Direct Preference Optimization. DPO introduced a trick to bypass reward modelling in Reinforcement Learning from Human Feedback (RLHF), which trains reward function $r_{\ve{\omega}}$ then policy $\pi_{\ve{\theta}}$ with parameters $\ve{\theta}$. RLHF makes use of  Bradley-Terry-Luce's (BTL) model \cite{19ff28b9-64f9-3656-ba40-08326a05748e} in its loss formulation, but its core normative safeguard to learn the reward is Savage's properness theory \cite{sEO}. DPO bypasses learning the reward function parameters ($\ve{\omega}$) by introducing an explicit direct link $r_{\ve{\theta}} = f(\pi_{\ve{\theta}})$ using the BTL model: in doing so, it introduces a normative safeguard from human choice theory. 

Savage's framework is as general as BTL is specific since BTL \textit{only} supports DPO's specific choice for the link $r_{\ve{\theta}} = f(\pi_{\ve{\theta}})$. DPO's popularity soared as fast as it became clear that its BTL component was increasingly playing the role a straitjacket for algorithm's design, some opting to remain very closely to BTL \cite{chen2025extendingdirectpreferenceoptimization,DBLP:conf/icml/ZhangZWXG25}, others explicitly abandoning any human choice affiliation \citep{DBLP:journals/corr/abs-2508-11363}. It is hard to exaggerate the risk of jeopardizing the human choice component in the current context of applications of LLMs, even more in the broader ML space \citep{DBLP:journals/corr/abs-2506-19882}. Yet, the knowledgeable eye inevitably remarks that a \textit{second} normative component fully supports the rest of DPO's analytical choices, i.e. its starting point, the RLHF objective, and its final loss, the cross-entropy: Savage's properness. In the context of DPO and the challenges at stake developed above, that these two fundamental normative theories meet at such a successful approach \textbf{inevitably begs a two-part question}: 

{(on the normative part) \textit{how can one \textit{conveniently} generalize BTL in such a way that DPO's connection generalizes to the entirety of Savage's framework, and} (on the ML part) \textit{what is the freedom this general framework brings to design preference optimization algorithms "\`a-la-DPO" ? }}

While the second part is obviously important for ML, the first part is also important because it explains the \textit{operating frameworks} behind the scenes for preference optimization. It can also influence the design of human experiments. 

\textbf{Our paper addresses and answers both parts} (and more). Specifically,
\begin{itemize}[noitemsep, topsep=0pt, parsep=0pt,partopsep=0pt,align=left,leftmargin=3pt,labelsep=-9pt]
\item \textit{on the normative part}, we build upon the seminal framework of \citet{DOIGNON1974473}, broadly generalizing BTL. However, a proper embedding in the ML/RLHF framework requires bypassing the use of its core axiom, solvability: we do it by introducing lotteries fully compatible with ML/RLHF's sampling, built on top of two seminal pillars of normative economics from Machina and Davidson/Marschak \citep{Machina85,dmET}. The normative framework we end up with unveils a third ``dimension'' to RLHF's diptych (choice, preference) with the possibility to \textit{abstain} from picking a choice. The \textit{normative structure} of abstention is of independent interest;
\item \textit{on the ML part}, what constitutes the analytical part of the algorithm -- two functions, one for the rewards, one for the loss, as in DPO -- stems from three independent bricks, two proper losses and one function defining human choice; 
\item \textit{on the freedom-of-design part}, the freedom that this general normative bedrock authorizes is nothing short of considerable, and quite unexpected in places:
\end{itemize}
\begin{enumerate}[noitemsep, topsep=0pt, parsep=0pt,partopsep=0pt,align=left,leftmargin=15pt,labelsep=-3pt]
\item our main result shows that any valid choice of the two-part algorithms functions (rewards / loss) can fit with \textbf{any} of our human choice components. Hence, there is absolutely no need to tie / justify a preference optimization algorithm with a specific human choice model (which constrains analytical choices, as seen from the numerous follow-ups to DPO calling to BTL);
\item a normative bedrock inevitably imposes regularity constraints on the analytical parts: the convexity of losses for real-valued prediction (supervised learning with e.g. the logistic loss) follows from Savage's properness. Our main result \textbf{alleviates} the need for convexity: preference optimization also operates with non-convex losses;
\item successful extensions of DPO have been designed, for home advantages, margins and length normalization \citep{DBLP:conf/nips/0001X024,DBLP:conf/acl/ParkREF24,li2025directpreferenceknowledgedistillation,zhao2025rainbowpo}. We show that our normative framework also \textbf{covers} and safeguards those, so ultimately it is not restricted to preference optimization "\`a-la-DPO";
\end{enumerate}
We also propose some toy experiment on how our framework can be used to design preference optimization algorithms that radically depart from the state of the art, still pretty much in a small world around DPO if we consider the considerable map for preference optimization that we unveil. All proofs and additional details appear in an Appendix.

% This is all in the normative bedrock that DPO started to unveil

%
\section{Basic definitions}

We adopt many definitions from \citet{rsmmefDP}. States / contexts / prompts are denoted by $x_{(.)} \in \mathcal{X}$, actions / alternatives / answers  by $y_{(.)} \in \mathcal{Y}$.  We denote for short $n \defeq |\mathcal{Y}|$ and $m \defeq |\mathcal{X}|$, $|.|$ being the size. We sometimes enumerate the elements of $\mathcal{Y}$ in $\mathbb{N}$ as $y_1, y_2, ..., y_n$. $\Delta_{\mathcal{Y}}$ being the set of distributions over $\mathcal{Y}$, a policy $\pi$ is an element of $\Delta_{\mathcal{Y}}^{\mathcal{X}}$, associating to each state a distribution over actions. There exists a reference (pretrained) policy, $\piref$. We also abbreviate the probability simplex $\Delta_n$ for short.

\subsection{Direct Preference Optimization (DPO)}\label{sec-dpo}

Given a reward function $r: \mathcal{X} \times \mathcal{Y} \rightarrow \mathbb{R}$, Reinforcement Learning from Human Feedback (RLHF) then seeks a policy $\pi_{\color{darkgreen} \theta} \in \Delta_{\mathcal{Y}}^{\mathcal{X}}$ maximizing the per-state objective ($\forall x \in \mathcal{X}$):
\begin{eqnarray}
\lefteqn{J_{\kl, r} (\pi_{\color{darkgreen} \theta}(.|x))}\nonumber\\
& = & \expect_{y \sim \pi_{\color{darkgreen} \theta}(.|x)}[r(x,y)] - D_{{\color{blue} \textsc{kl}}}(\pi_{\color{darkgreen} \theta}(.|x) \| \piref(.|x)). \label{optpolx}
  \end{eqnarray}
To get to DPO, one first remarks that the rewards satisfy at the optimum of \eqref{optpolx} $r_{\color{darkgreen} \theta}(x,y) = {\color{blue} \log } \frac{\pi_{\color{darkgreen} \theta}(y | x)}{\piref(y | x)} + Z(x)$. Second, Bradley-Terry-Luce's model builds choice probabilities from rewards (${\color{blue}\sigma(}z{\color{blue})}\defeq {\color{blue}1/(1+\exp(-}z{\color{blue}))}$ is the sigmoid):
\begin{eqnarray}
p_{\color{darkgreen} \theta}(y \succ y'|x)  & \defeq & {\color{blue} \sigma ({\color{black}  r_{\color{darkgreen} \theta}(y,x) - r_{\color{darkgreen} \theta}(y',x)})}.\label{choiceyypx}
  \end{eqnarray}
Disregarding parameter ${\color{darkgreen} \theta}$ and the underlying measure, $p(y \succ y'|x)$ is the probability that $y\in \mathcal{Y}$ be \textit{chosen} over $y'\in \mathcal{Y}$ given $x\in \mathcal{X}$ (the fundamental notion of \textit{preference} is defined later).\\
  Third and last, there remains to include those probabilities in a loss for class-probability estimation which can be directly optimized from user's data. In the case of DPO, it is the log-loss $ I(\pi_{\color{darkgreen}\theta})  = \expect_{(x,y,y')\sim \mathcal{D}} \left[{\color{blue} -\log}\: p_{\color{darkgreen}\theta} (y \succ y' | x)\right]$. Importantly, \textit{observation} $(x,y,y')$ means that $y \in \mathcal{Y}$ is \textit{chosen} over $y' \in \mathcal{Y}$ given $x \in \mathcal{X}$. The final loss to be optimized only depends on policies and has a well-known shape:
\begin{eqnarray}
I(p_{\color{darkgreen}\theta}) \hspace{-0.1cm}=\hspace{-0.1cm} \expect_{(x,y,y') \sim \mathcal{D}}\hspace{-0.1cm}\left[{\color{blue} \psi} \hspace{-0.1cm} \left(\hspace{-0.1cm}- \hspace{-0.1cm}\left({\color{blue} \log } \frac{\pi_{\color{darkgreen} \theta}(y | x)}{\piref(y | x)} \hspace{-0.1cm}- \hspace{-0.1cm}{\color{blue} \log } \frac{\pi_{\color{darkgreen} \theta}(y' | x)}{\piref(y' | x)}\hspace{-0.1cm}\right)\hspace{-0.1cm}\right)\hspace{-0.1cm}\right]\label{defIDPO},\hspace{-0.3cm}
  \end{eqnarray}
  where ${\color{blue} \psi(-}z{\color{blue})} \defeq {\color{blue} \log(1+\exp(-}z{\color{blue}))}$ is the logistic loss.

\subsection{Bregman divergences}\label{sub-breg}
  
  It is remarkable (but, as we shall see, not unexpected) that all the blue parts above are tightly linked: the sigmoid is the inverse canonical link of the log-loss, the logistic loss is its dual convex surrogate \citep{nnOT,rwCB} and finally the KL divergence is the \textit{Bregman divergence} of the log-loss. A Bregman divergence $D_{\phi, \ve{H}} (\ve{u}\|\ve{v}) \defeq \phi(\ve{u}) - \phi(\ve{v}) - (\ve{u}-\ve{v})^\top\ve{H}(\ve{v})$ is defined for convex $\phi$ and a selection of its subdifferential, $\ve{H} \in \partial \phi$. Bregman divergences have a second form, introduced in information geometry by \citet{anMO} (see also \citet{bmnLW}), whereby $D_{\phi} (\ve{u}\|\ve{v}^*) \defeq \phi(\ve{u}) + \phi^\star(\ve{v}^*) -\ve{u}^\top \ve{v}^*$, where $\varphi^\star(\ve{v})\defeq \sup_{\ve{t}} \ve{t}^\top \ve{v} - \varphi(\ve{t})$ is the convex conjugate. This representation keeps implicit the dependence on the subgradient.

\negativespace
\negativespace
\section{Bedrock}\label{sec-nor}

\subsection{On loss functions}

A loss function is a function $L(\ve{p}, \ve{q}) : \Delta_n \times \Delta_n \rightarrow \mathbb{R}$. 
\begin{definition}\label{def-strict-p}
  Fix $\ve{p} \in \Delta_n$. Loss $L$ is $\ve{p}$-proper iff $L(\ve{p}, \ve{q}) \geq L(\ve{p}, \ve{p}), \forall \ve{q} \in \Delta_n$. It is proper iff it is $\ve{p}$-proper for any $\ve{p} \in \Delta_n$.
\end{definition}
\negativespace
\negativespace
\textit{Strict} properness holds if the inequality is an equality only for $\ve{q} = \ve{p}$. The log-loss of DPO is strictly proper. Properness matters because it encourages guessing the ground truth. Savage proposed to use an appealing form for $L$:
\negativespace
\negativespace
\begin{eqnarray}
  L(\ve{p}, \ve{q}) & \defeq & \ve{p}^\top \ve{\ell}(\ve{q}) =  \expect_{i\sim \cat(\ve{p})} [\ell_i(\ve{q})],\label{savageL}
  \end{eqnarray}
  where "$\cat$" denotes a categorical distribution with parameter $\ve{p}$. Any proper loss is concave, but admits via convex duality an "equivalent" convex loss for real-valued prediction: DPO's logistic loss in \eqref{defIDPO} is the dual of its log-loss. Properness is intensional \citep{wnAY} and can be tricky to operate if (parts of) the loss itself need to be learned \citep{alfano2025metalearningobjectivespreferenceoptimization, gupta2025alphaporewardshape}. Fortunately, there is a characterization of proper losses that makes handling a lot easier: all proper losses can be represented as Bregman divergences, and reciprocally every Bregman divergence "encodes" (the regret of) a proper loss \citep{nnOT,JMLR:v17:14-294}. RLHF's KL divergence in \eqref{optpolx} is the Bregman representation of the log-loss.

\subsection{On human choice and preferences}

Savage's properness theory is very broad compared to Bradley-Terry-Luce's model (BTL) so we need to get "above" BTL to make a complete connection with the full extent of Savage. BTL is a stochastic (human) choice model that determines, when comparing two options $\yone, \ytwo \in \mathcal{Y}$, the probability of $\yone$ being chosen over $\ytwo$, computed as the reward obtained from $\yone$, normalized by the sum of the two rewards. An important property is that, for a given prompt $x \in \mathcal{X}$, we have $p(\yone \succ \ytwo |x) +p(\ytwo \succ \yone |x) = 1$, so that abstention is not allowed. Nevertheless, because the set of options $\mathcal{Y}$ can constitute a highly heterogeneous choice space, certain pairs may simply prove to be incomparable to a human evaluator. 

We introduce a generalization of BTL that addresses the problem of pairwise choice in a local manner, which we then extend globally much like a graph: it contains chains of completely comparable options, but also pairs of incomparable ones (resulting in abstention). The appropriate axiomatic framework for deriving such a model is that proposed by \citet{DOIGNON1974473}, or \citet{klstFO} (Section 17.2.3, Theorem 2). However, this approach relies on a specific axiom called ``solvability'' \cite{Debreu58}, which imposes an uncountable set of actions and is therefore unfit for our purpose. To bypass this issue, we first assume that choice probabilities arise from an explicit maximization problem over ``lotteries'', as proposed by \citet{Machina85}. %, rather than being derived from a stochastic preference relation, or a "random utility function".
An example of lottery we use is $(\yone \ytwo)_\alpha$ ($\alpha \in [0,1]$), where one of the two options is revealed to the human evaluator through randomization, $\yone \in \mathcal{Y}$ with probability $\alpha$ and $\ytwo \in \mathcal{Y}$ with probability $1-\alpha$. Such binary lotteries have been already examined in an experimental setting, involving human evaluators, by \citet{dmET}. For these reasons, we name these lotteries Machina-Davidson-Marschak (MDM) Lotteries. Then, instead of solvability, we consider a property comparable to the ``Reduction of Compound Lotteries Axiom'' \citep{LuceRaiffa57, Machina14}, which we call ``expandability''. 

We end up with a simplified treatment of the seminal works of \citet{DOIGNON1974473} and \citet{klstFO}, yet with MDM's ``twist'', and use the acronym \klst~to name this framework. We obtain a normative bedrock for choice theory able to connect with Savage \textit{in extenso}, with underlying mechanisms that resonate in the context of RLHF, and most importantly authorizing a natural behavior forbidden in BTL: \textit{abstention}. \klst~is indeed able to encompass the setting where users can decline choosing between $y$ and $y'$ given $x$. Obviously, the triptych (choice, preference, abstention) needs to operate with specific rules to be consistent and useful, and we now describe its three main layers: expandability, local choice structure and monotonicity.

\noindent \textbf{Expandability} defines the way we include simple, binary choice lotteries. At the human choice's end, stochastic choices on MDM lotteries for lottery $(\yone \ytwo)_\alpha$ vs general lottery $L$ are computed as\footnote{The general lottery notation $L$ is the same as for a loss function but the objects referred to are clear from context.}
\negativespace
\negativespace
\begin{eqnarray}
  \lefteqn{p((\yone \ytwo)_\alpha  \succ  L | x)}\nonumber\\
  & \defeq &  \alpha \cdot  p(\yone  \succ  L | x)  \hspace{-0.05cm} + \hspace{-0.05cm}  (1-\alpha) \cdot  p(\ytwo  \succ  L | x).  \hspace{-0.3cm} \label{debreuL}
\end{eqnarray}
and similarly $p(L \succ (\yone \ytwo)_\alpha | x) = \alpha \cdot  p(L \succ \yone | x) + (1-\alpha) \cdot  p(L \succ \ytwo | x)$. Putting a density on $\alpha$ yields a sampling process totally compatible with the triplet sampling process $(x, y, y')$ against which preference optimization is usually done. If deemed preferable or necessary (if e.g. picking $\alpha$ does not follow a stochastic process), one could also consider having the user face lotteries directly in the choices to make. Fix  $\alpha \in [0,1]$. Let $\mathcal{Y}^\alpha \defeq \{(yy')_\alpha : y, y' \in \mathcal{Y}\}$ be the set of available lotteries. An important point to remember is that lottery $(yy)_\alpha$ is equivalent to alternative $y$ because any choice probability involving $(yy)_\alpha$ breaks down to the one involving only $y$ via \eqref{debreuL}. Hence $\mathcal{Y}^\alpha$ "contains" $\mathcal{Y}$. We say that choice probabilities are \textit{expandable} to denote that the support of all actions $\mathcal{Y}$ is extended to cover all lotteries.\\
\noindent \textbf{Local choice structure} Fix  $\alpha \in [0,1]$. Each $x \in \mathcal{X}$ defines an undirected graphs $G^\alpha_x \defeq (\mathcal{Y}^\alpha, \mathcal{E}^\alpha_x)$ where edges encode \textit{zero abstention} (\textbf{ZA}) in a choice between lotteries:
\negativespace
\negativespace
\begin{eqnarray}
(L,L') \in \mathcal{E}^\alpha_x \Leftrightarrow p(L\succ L' |x) +p(L'\succ L |x) = 1.\label{defG}
\end{eqnarray}
We also introduce the shorthand
\negativespace
\negativespace
\begin{eqnarray}
L \triangleright_x L' & \Leftrightarrow & p(L\succ L' |x) \geq 1/2 \label{defPREF}
\end{eqnarray}
($\alpha$ implicit), indicating that lottery $L$ is \textit{preferred} (\textbf{P}) over $L'$ given $x$. In an undirected graph, a \textit{wedge} $\{a,b,c\}$ is a sequence of three vertices that form a chain as $(a, b), (b,c)$. Our definition of a local choice structure, which extends to lotteries that of \citet[Chapter 17]{klstFO}, follows.
  \begin{definition}\label{lottery-LCS}
    Lotteries yield a Local Choice Structure (LCS) iff the  three properties are satisfied for any $\alpha \in (0,1)$:
\negativespace
  \begin{itemize}[noitemsep, topsep=0pt, parsep=0pt,partopsep=0pt,align=left,leftmargin=3pt]
\item \textbf{Bearability}: $(L,L) \in \mathcal{E}^\alpha_x, \forall L \in \mathcal{Y}^\alpha$;
  \item \textbf{ZA$\wedge$P$\Rightarrow$ZA}: for any wedge $\{\ltwo, \lone, \lthree\}$ in $G^\alpha_x$, if in addition $(\lone \triangleright_x \ltwo)\wedge (\lone \triangleright_x \lthree)$ or $(\ltwo \triangleright_x \lone)\wedge (\lthree \triangleright_x \lone)$ then $(\ltwo, \lthree) \in \mathcal{E}^\alpha_x$;
  \item \textbf{P$\Rightarrow$ZA$\wedge$P}: $\forall L, L'$ s. t. $L \triangleright_x L'$ (resp. $L' \triangleright_x L$), $\exists n\geq 1$ and path $(L = L_0, L_1)$, $(L_1, L_2)$, $..., (L_{n-1},L_n = L')$ in $\mathcal{E}^\alpha_x$ s. t. $L_i \triangleright_x L_{i+1}, \forall i \in [n-1]$ (resp. $L_{i+1} \triangleright_x L_{i}, \forall i \in [n-1]$).
    \end{itemize}
  \end{definition}
\negativespace
\negativespace
  Bearability is equivalent to asking a user to choose between two copies of the same lottery without abstaining, thus imposing that all lotteries and all alternatives are bearable. The two following axioms are logically named to emphasize interactions between preference (\textbf{P}) and zero abstention (\textbf{ZA}).\\
  The first, \textbf{ZA$\wedge$P$\Rightarrow$ZA}, says that preference holds insights that alleviate abstention. Suppose $x$=$S$=``Indoor activity in ($S$)eoul in July'', $\lone$=($L$)ibrary, $\ltwo$=($Z$)oo, $\lthree$=($M$)aze. Now, \textit{suppose} people's profile is such that $Z\triangleright_S L$, $M\triangleright_S L$ and there is \textit{no} abstention in the related choices, because in the context of the prompt $S$ people would rather do an animal- or exercise-related activity ($Z$ or $M$). Then it makes sense to think that if they were presented with choosing $Z$ vs $M$, they would not abstain in picking one, regardless of the polarity of the choice.\\
  The second, \textbf{P$\Rightarrow$ZA$\wedge$P}, says that there is always a zero abstention chain ``hidden'' under a preference: if a lottery is preferred to another, there exists a chain of preferences \textit{without abstention} that connects the two. Suppose $x$=$K$=``Looking for ICML keynote speaker'' and $L$, $L'$ two candidates in a poll with a marked preference, say $L \triangleright_K L'$, but non-zero abstention because of very different profiles that creates an apples-to-oranges feeling. Then it is reasonable to think that one could extract a list of potential speakers $L_1, L_2, ..., L_{n-1}$ such that people would not abstain in choosing in $\{L_i, L_{i+1}\}$ ($i = 0, 1, ..., n$, $L_0 \defeq L, L_n \defeq L'$): think of a list of speakers with slowly drifting profiles in the sequence but with clear "orthogonal" comparability (e.g. contract amounts, citations, institutions, collaborations, etc.) so that there would be no will for abstention.
  Note that the request $\alpha \in (0,1)$ excludes the extreme choices that would break down lotteries to trivial choices from $\mathcal{Y}$.\\
\noindent\textbf{Monotonicity} To wrap up our model, we need a property among \textit{relative preferences} \citep{klstFO}.
\begin{definition}\label{lottery-Mono}
Lotteries satisfy monotonicity iff \textsl{there exists} $\alpha \in (0,1)$ such that the following holds. For any $\lone, \ltwo, \lthree, \lfour, \lfive, \lsix$ in $\mathcal{Y}^\alpha$, if (i) $\{\lone, \ltwo, \lthree\}$ is a triangle in $\mathcal{E}_x ^\alpha$, (ii) $\{\lfour, \lfive, \lsix\}$ is a wedge in $\mathcal{E}_x ^\alpha$, (iii) $p(\lone \succ \ltwo | x) \geq p(\lfour \succ \lfive| x) $ and (iv) $p(\ltwo \succ \lthree | x) \geq p(\lfive \succ \lsix| x) $, then
\negativespace
\negativespace
    \begin{eqnarray}
((\lfour, \lsix) \in \mathcal{E}_x ^\alpha) \wedge (p(\lone \succ \lthree | x) \geq p(\lfour \succ \lsix| x)). \label{defmon}
    \end{eqnarray}
    \end{definition}
\negativespace
\negativespace
    The large number of parameters may create an impression of complexity which hides a simple core message: condition (iii) may be read ``$\lone$ is preferred to $\ltwo$ \textbf{more than} $\lfour$ is preferred to $\lfive$'' and condition (iv) is ``$\ltwo$ is preferred to $\lthree$ more than $\lfive$ is preferred to $\lsix$''. The right consequence in \eqref{defmon} is then intuitive: ``$\lone$ is preferred to $\lthree$ more than $\lfour$ is preferred to $\lsix$''. The left consequence in \eqref{defmon}, that the wedge on condition (ii) be in fact a triangle in $\mathcal{E}_x^\alpha$, imposes that all six probabilities in the definition be supported by choices without abstention, so it limits uncertainty. Importantly, monotonicity needs to be defined only for one non-trivial $\alpha \in (0,1)$ (different from the requirements of a LCS).\\
  \noindent\textbf{\klst~structure} We now state this key definition.
%\negativespace
  \begin{definition}\label{def-KLST-p}    
    The choice probabilities $p(y \succ y' | x)$ have a \textbf{\klst~structure} iff they are expandable and the resulting lotteries (i) yield a local choice structure and (ii) satisfy monotonicity, $\forall x \in \mathcal{X}$.
  \end{definition}
\negativespace
\negativespace
If we were to impose $G_x^\alpha$ to be a clique, i.e. to never abstain to pick a choice, then this setting could be substantially simplified to the seminal setting of \citep{Debreu58}, encompassing as particular case the BTL model. Additional details about \klst~are in the Appendix, Section \ref{sec-sup-klst}.

\section{The Expanse}

We fully generalize all the analytical choices DPO as in \eqref{optpolx}, \eqref{choiceyypx}, \eqref{defIDPO} (Section \ref{sec-dpo}) using the full power of the normative framework introduced in Section \ref{sec-nor}.

\subsection{Step 1, the RLHF objective} \label{sec-rlhf}
Let us generalize \eqref{optpolx} to 
\begin{eqnarray}
J_{R, r} (\pi(.|x)) \defeq \expect_{y \sim \pi(.|x)}[r(x,y)] - R(\pi(.|x) \| \piref(.|x)) \label{piloss-GGen}
\end{eqnarray}
with $R : \Delta_n \times \Delta_n \rightarrow \mathbb{R}$. What can we require of $R$ ? A first condition makes all the possible choices on the same minimization scale and require their minimal value to be 0. In ML, a usual constraint is for $R$ to be the \textit{regret} of a function $Q : \Delta_n \times \Delta_n \rightarrow \mathbb{R}$, so we are left with eliciting $Q$. A simple constraint for $Q$ is obtained by remarking that RLHF is iterative in nature for policy learning: $\piref$ was first learned, and then \eqref{piloss-GGen} is solved. So suppose we have learned
\begin{eqnarray*}
  \pi_*(.|x) & \defeq & \arg\max J_{R, r} (\pi(.|x)),
\end{eqnarray*}
and we want to carry out another iteration of training. We must ensure that $\pi_*(.|x)$ is the best initialization, just like $\piref(.|x)$ was for the previous stage, regardless of the rewards to be involved. This leads to a specific request on the objective: 
\begin{eqnarray}
\pi(.|x) & \in & \arg\min_{\pi'(.|x)} Q(\pi(.|x) \| \pi'(.|x)) . \label{eqPROPER1} 
\end{eqnarray}
In the jargon of Savage, this means imposing $Q$ be proper. 
\begin{theorem}\label{thSTP}
  If $R$ is the regret of a proper loss then
  \begin{eqnarray}
  R(\pi(.|x) \| \pi'(.|x)) & = & D_{\phi,\ve{G}}(\pi(.|x) \| \pi'(.|x)) \label{defR}
  \end{eqnarray}
  where $D_{\phi,\ve{G}}$ is a Bregman divergence. More specifically, we can show $\phi(\ve{p}) = - L(\ve{p}, \ve{p})$ for some proper loss $\ve{\ell}$ \eqref{savageL} and can pick $G_i = -\ell_i$.
\end{theorem}
Proof in Appendix, Section \ref{proof-thSTP}. Note that $G_i$ is allowed to depend not just on $y_i$.

\subsection{Step 2, the link human choice - rewards}\label{sec-link}
Without further ado, we provide the generalization to \eqref{choiceyypx}.
\begin{theorem}\label{Th-KLST-Ext}
  Suppose $p(y \succ y' | x)$ has a \klst~ structure. Then there exists
  \begin{enumerate}[noitemsep, topsep=0pt, parsep=0pt,partopsep=0pt,align=left,leftmargin=3pt]
  \item a strictly increasing function $F$ with $\mathrm{Im}F \subseteq [0,1]$ and such that $F(-z) + F(z) \leq 1, \forall z \in \mathrm{dom}F$;
  \item a function $u(x,y) : \mathcal{X} \times \mathcal{Y} \rightarrow \mathbb{R}$ such that
  \end{enumerate}
  \begin{eqnarray}
 p(y \succ y' | x) & = & F(u(x,y) - u(x,y')). \label{social-choice-F}
    \end{eqnarray}
    \end{theorem}
    Proof in Appendix, Section \ref{proof-Th-KLST-Ext}. Importantly, we now check that the trick allowing DPO to discard the expensive $Z(x)$ normalization from choice probabilities in \eqref{choiceyypx} (Section \ref{sec-dpo}) still holds. We get from KKT conditions in the optimization of \eqref{piloss-GGen} with $R$ as in \eqref{defR} the relationship (indexed notations follow Theorem \ref{thSTP}) for any $i\in [n]$,
    \begin{eqnarray}
      \lefteqn{r(x,y_i) + 0_{\pi(y_i|x)>0}\cdot \mu_{y_i}}\nonumber\\
      & = & \lambda + G_i(\pi(.|x)) - G_i(\piref(.|x)), \quad \ve{G} \in \partial \phi. \label{eqPF2}
    \end{eqnarray}
    Here, $\lambda$ is the Lagrange multiplier enforcing normalization of $\pi(.|x)$, $\mu_{y_i} \geq 0$ is the one for the constraint of non-negativity for $\pi(y_i|x)$; finally, for a predicate $\Pi$, $0_{\Pi}$ takes value 0 if $\Pi$ is true and 1 otherwise. If we plug $u(x,y_i) \defeq r(x,y_i) + 0_{\pi(y_i|x)>0}\cdot \mu_{y_i}$ (if all policy values are $>0$ such as in DPO, this is just the reward function) in \eqref{eqPF2}, then we remark for any $x,y,y'$, 
\begin{eqnarray}
      \lefteqn{\hspace{-0.3cm}u(x,y_i) - u(x,y_j)}\nonumber\\
    & \hspace{-0.9cm} = & \hspace{-0.7cm}  G_i(\pi(.|x)) \hspace{-0.1cm}-\hspace{-0.1cm}G_i(\piref(.|x))\hspace{-0.1cm}-\hspace{-0.1cm}G_j(\pi(.|x))\hspace{-0.1cm}+\hspace{-0.1cm}G_j(\piref(.|x)), \label{eqPF22}
\end{eqnarray}
so like in DPO, the expression of $p(y_i \succ y_j | x)$ in Theorem \ref{Th-KLST-Ext} does not depend on the (computationally expensive) $\lambda$.

\subsection{Step 3, the final loss}\label{sec-sub3}
We finally investigate what eligible $\psi$ can be used in \eqref{defIDPO} to replace the logistic loss. Minimizing a function $I$ depending on probabilities, Savage's properness framework is again the instrument of choice for the generalization so we learn policy parameters $\ve{\theta}$ to minimize 
\begin{eqnarray}
I(\pi_{\ve{\theta}}) & \defeq & \expect_{(x,y_i,y_j) \sim \mathcal{D}} \left[\ell (p_{\ve{\theta}}(y_i \succ y_j | x))\right]\label{savage3}.
\end{eqnarray}
for some \textit{function} $\ell : [0,1] \rightarrow \mathbb{R}$ which is part of a proper loss definition. Let us progressively connect the dots between $\ell$ and $\psi$ via proper losses.\\
First, \eqref{savage3} defines a \textit{binary} classification problem where $(x,y_i,y_j)$ is positive if say "$y_i \succ y_j$ given $x$" and negative if "$y_j \succ y_i$ given $x$" \citep{sun2025rethinking} so the proper loss is scalar valued, which we note $L(p, p) = p \tilde{\ell}_1 (p) + (1-p)\tilde{\ell}_0(p)$ after \citet{rwCB}. The indexing in $\{0,1\}$ is a convention departing from our indexing in $1, 2, ..., n$ (Definition \ref{def-strict-p}): "0" usually refers to the so-called "negative class" and "1" to the "positive class". Such a loss is noted $(\tilde{\ell}_0, \tilde{\ell}_1)$.\\
Second, the connection naturally follows from the following Theorem, which is the main result of our paper.
\begin{theorem}\label{Th-compo-conn}
  For \textbf{any} strictly increasing function $\psi: \mathbb{R} \rightarrow \mathbb{R}$,  \textbf{any} strictly increasing function $\tilde{F}: \mathbb{R} \rightarrow [0,1]$, there exists a strictly proper loss $(\tilde{\ell}_0, \tilde{\ell}_1)$ such that
  \begin{eqnarray}
    \psi(z) & = & \tilde{\ell}_0 \circ \tilde{F}(z).\label{link-psi-loss}
    \end{eqnarray}
    \end{theorem}
    Proof in Appendix, Section \ref{proof-Th-compo-conn}. Crucially for the scope of our paper, the proof is constructive: it gives the expression of the proper loss. Decomposition \eqref{link-psi-loss} follows the blueprint of composite losses \citep{rwCB}. The numerous consequences of Theorem \ref{Th-compo-conn} are explored in Section \ref{sec-disc}. Before, let us wrap up with the generalization of \eqref{defIDPO} that \eqref{link-psi-loss} yields via \eqref{savage3}. Define for short 
    \begin{eqnarray}
    \Delta_{i,j}(x) & \defeq & G_i(\pi_{\ve{\theta}}(.|x)) - G_i(\piref(.|x)) \nonumber\\ 
    & & - G_j(\pi_{\ve{\theta}}(.|x)) + G_j(\piref(.|x)) \label{eqDeltaij}
    \end{eqnarray} 
    (also $= u(x,y_i) - u(x,y_j)$ \eqref{eqPF22}), for $i, j \in \{1, 2, ..., n\}, x \in \mathcal{X}$. Then the connection with \eqref{savage3} naturally comes with 
    \begin{eqnarray}
    \ell(p) & \defeq & \tilde{\ell}_0(1-p)\label{eqlloss},\\
    p_{\ve{\theta}}(y_i \succ y_j | x)) & = & 1 - \tilde{F}(-\Delta_{i,j}(x)). \label{eqploss}
    \end{eqnarray}
Consider a function $\tilde{F}$ in Theorem \ref{Th-compo-conn} that satisfies in addition $\tilde{F}(z) + \tilde{F}(-z) \geq 1$. Then function $F(z) \defeq 1 - \tilde{F}(-z)$ is strictly increasing, has $\mathrm{Im}F \subseteq [0,1]$ and $F(z) + F(-z) = 2 - \tilde{F}(z) - \tilde{F}(-z) \leq 1$ and so \eqref{eqploss} can embed a \klst~structure. 

\subsection{Composite, canonical connections and symmetry}\label{cocasy}

Let us summarize first what has been achieved so far: we have created a ML relevant normative bedrock to sustain a generalization of DPO and this generalization retains all of the analytical features that make DPO appealing. This generalization relies on a triptych $(\ve{\ell}, F, \tilde{\ve{\ell}})$:
\begin{itemize}[noitemsep, topsep=0pt, parsep=0pt,partopsep=0pt,align=left]
\item $\ve{\ell}: \Delta_n \rightarrow \mathbb{R}^n$ defines a proper loss;
\item $F : \mathbb{R} \rightarrow [0,1]$ is strictly increasing and satisfies $F(z) + F(-z) \leq 1$;
\item $\tilde{\ve{\ell}}: \Delta_2 \rightarrow \mathbb{R}^2$ defines a strictly proper loss. 
\end{itemize}
Theorems \ref{thSTP}, \ref{Th-KLST-Ext} and \ref{Th-compo-conn} and their related Subsections show how to construct all analytical elements generalizing DPO for any applicable $(\ve{\ell}, F, \tilde{\ve{\ell}})$. There is thus considerable freedom in the design of all elements of our generalization of DPO, \textit{and} the complete extent of this freedom is safeguarded by a solid normative bedrock.

\textbf{Importantly}, there is absolutely no need for any element in $(\ve{\ell}, F, \tilde{\ve{\ell}})$ to be related to the others as they can be totally independent from each other. However, interesting simplifications happen in special cases to which DPO belongs and that we now cover. We say that $\tilde{\ve{\ell}} \defeq (\tilde{\ell}_0, \tilde{\ell}_1)$ is symmetric iff $\tilde{\ell}_0(p) = \tilde{\ell}_1(1-p)$ \citep{nnOT}.

\begin{theorem}\label{convConj}
      For any strictly proper loss $(\tilde{\ell}_0, \tilde{\ell}_1)$, denote $\phi(p) \defeq - L(p, p)$ as per \eqref{savageL}. Letting 
      \begin{eqnarray}
      H(p)& \defeq & \tilde{\ell}_0(p) - \tilde{\ell}_1(p)\label{defHinv},
      \end{eqnarray}
      we have both $H \in \partial \phi$ and
      \begin{eqnarray}
      \phi^\star(z) & = & \tilde{\ell}_0 \circ H^{-1}(z) .\label{decomploss}
      \end{eqnarray}
      If the loss is also symmetric, then $\tilde{\ell}_1 \circ H^{-1}(z) = \phi^\star(-z)$.
\end{theorem}
(the convex conjugate is defined in Subsection \ref{sub-breg}; proof in Appendix, Section \ref{proof-convConj}). ML's most popular losses for real-valued classification have the expression in \eqref{decomploss}: logistic, square, Hinge, Matushita, etc. . $\phi$ is strictly convex \cite{Gneiting01032007} and so a canonical link $H$ of the loss, which has a remarkably simple analytical form, is strictly increasing and can be used to craft the human choice's $F$ in the \klst~model. This is stated more precisely now.
\begin{corollary}\label{corIStep3}
  For any strictly proper loss $(\tilde{\ell}_0, \tilde{\ell}_1)$, suppose we pick ${\color{red} F = H^{-1}}$ in Theorem \ref{Th-KLST-Ext} where $H$ is as in \eqref{defHinv} and let $\ell \defeq \tilde{\ell}_0$ in \eqref{savage3}. Then \eqref{savage3} simplifies to:
  \begin{eqnarray}
    I(\pi_{\ve{\theta}}) & = & \expect_{(x,y_i,y_j)} \left[\psi \left(-\Delta_{i,j}(x) \right)\right]. \label{eqII2-1}
  \end{eqnarray}
  with $\psi \defeq \phi^\star$ and $\Delta_{i,j}(x)$ is defined in \eqref{eqDeltaij}.
\end{corollary}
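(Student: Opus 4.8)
The plan is to read the integrand of \eqref{eqII} as a single scalar function of a reward-margin, and collapse it using Theorem \ref{convConj}(ii). Write $z \defeq G_i(\pi_{\ve{\theta}}(.|x)) - G_i(\piref(.|x)) - G_j(\pi_{\ve{\theta}}(.|x)) + G_j(\piref(.|x))$ for the bracketed quantity in \eqref{eqII2-1}; by \eqref{eqPF22} this is exactly $u(x,y_i)-u(x,y_j)$, so that $p_{\ve{\theta}}(y_i \succ y_j | x) = F(z)$ via Theorem \ref{Th-KLST-Ext}. Strict properness makes $\phi$ strictly convex, hence $H$ strictly increasing and $H^{-1}$ well defined, so the substitutions $F = H^{-1}$ and $\ell \defeq \ell_0$ turn the integrand into a scalar function of $z$ alone; the entire content of the statement is to identify this function with $\phi^\star(-z)$.

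The first real step, and the only delicate bookkeeping, is fixing the sign convention. The datum records the preference $y_i \succ y_j$; to route its contribution through the \emph{negative}-class loss $\ell_0$ rather than $\ell_1$, I read it as a class-$0$ observation of the reversed event ``$y_j \succ y_i$''. By Theorem \ref{Th-KLST-Ext} applied with the two alternatives swapped, its model probability is $p_{\ve{\theta}}(y_j \succ y_i | x) = F(u(x,y_j)-u(x,y_i)) = F(-z)$, the sign of the utility difference flipping under the swap. Hence the integrand is $\ell_0(F(-z)) = \ell_0(H^{-1}(-z))$. This is precisely the ``general form $\ell_0 \circ H^{-1}(-x)$'' announced before the statement, and it is the step that installs the minus sign of \eqref{eqII2-1}.

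Next I would apply Theorem \ref{convConj}(ii) at the point $-z$: since $\ell_0 \circ H^{-1} = \phi^\star$ as an identity on $\mathrm{dom}\,\phi^\star$, the integrand equals $\phi^\star(-z)$. Setting $\psi \defeq \phi^\star$ rewrites it as $\psi(-z)$, which is exactly the bracketed argument in \eqref{eqII2-1}; since the pointwise integrands now agree, taking $\expect_{(x,y_i,y_j)}$ on both sides closes the argument.

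The main obstacle is conceptual rather than computational: justifying the class relabelling that produces the argument $-z$ in place of $+z$. This matters because $\phi^\star$ is increasing, so only $\phi^\star(-z)$ is \emph{decreasing} in the winner's margin $z$ and therefore rewards preferring the observed winner $y_i$ (for the log-loss this is the sanity check $\phi^\star(-z)=\log(1+e^{-z})$, recovering DPO's logistic loss). I would also record that this $\ell_0$-route is what makes the claim hold for \emph{every} strictly proper loss: for symmetric losses Theorem \ref{convConj}(iii) offers the shortcut $\ell_1 \circ H^{-1}(z) = \phi^\star(-z)$ with no relabelling, and the two routes coincide because symmetry forces $\ell_0(1-p)=\ell_1(p)$ together with $H^{-1}(z)+H^{-1}(-z)=1$; but for non-symmetric losses only the $\ell_0 \circ H^{-1}(-z)$ route is available, which is exactly why the corollary is phrased through $\ell \defeq \ell_0$.
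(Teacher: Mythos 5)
Your proposal is correct and takes essentially the same route as the paper, which gives no separate proof and treats the corollary as an immediate consequence of Theorem \ref{convConj}(ii) evaluated at $-z$, i.e.\ the ``general form $\ell_0 \circ H^{-1}(-x)$'' announced in the sentence just before the statement. Your explicit justification of the minus sign --- reading the datum $y_i \succ y_j$ as a class-$0$ observation of the reversed event so the integrand is $\ell_0(F(-z)) = \phi^\star(-z)$ --- together with your check that the symmetric-loss route $\ell_1 \circ H^{-1}(z)$ coincides with it via $\ell_0(1-p)=\ell_1(p)$ and $H^{-1}(z)+H^{-1}(-z)=1$, faithfully spells out what the paper leaves implicit.
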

 Tying the \klst~model to the loss via the constraint ${\color{red} F = H^{-1}}$ is called a {\color{red} canonical} connection. Any valid $F$ in $(\ve{\ell}, F, \tilde{\ve{\ell}})$ with $\lim_{-\infty} F = 0, \lim_{+\infty} F = 1$ and satisfying additional assumptions is such that $F^{-1} = \tilde{\ell}_0 - \tilde{\ell}_1$ for some strictly proper loss $(\tilde{\ell}_0, \tilde{\ell}_1)$. Using the vocabulary of properness \citep{rwCB}, this justifies calling the general triptych $(\ve{\ell}, F, \tilde{\ve{\ell}})$, without any constraint, a {\color{red} composite} connection. 

In this very broad picture of the triptych $(\ve{\ell}, F, \tilde{\ve{\ell}})$, one sees that DPO occupies a tiny and very specific place: it corresponds to (i) a canonical connection with (ii) a symmetric loss (the log-loss), and also has (iii) the additional constraint $\ell_i = \tilde{\ell}_j, \forall i, j$, as we recall $\tilde{\ell}_0(p) \defeq -\log(1-p)$ and $\tilde{F}(z) = F(z) = \sigma(z)$.
\begin{remark}
  If $F = H^{-1}$, condition $F(-z) + F(z) \leq 1$ (Theorem \ref{Th-KLST-Ext}) is equivalent to requiring $\forall \delta \in (0,1/2)$,
  \begin{eqnarray*}
    \tilde{\ell}_0\left(\frac{1}{2}+\delta\right) + \tilde{\ell}_0\left(\frac{1}{2}-\delta\right) \geq \tilde{\ell}_1\left(\frac{1}{2}+\delta\right) + \tilde{\ell}_1\left(\frac{1}{2}-\delta\right).
  \end{eqnarray*}
While this condition is met for symmetric losses (with equality), if an asymmetric loss $(\tilde{\ell}_0, \tilde{\ell}_1)$ does not meet it, then the "flipped" loss $(\tilde{\ell}_1(1-p), \tilde{\ell}_0(1-p))$ does.
\end{remark}

\section{Fallout}\label{sec-disc}

In this section, we discuss the repercussions of our theory.

\subsection{The human choice model vanishes}

The reason why Theorem \ref{Th-compo-conn} is our main result is not just technical: it is also about its consequence in the context of the flurry of refinements or generalizations of DPO. 

It is indeed an understatement to say that since DPO, attempts have been flourishing to depart from DPO's choices. However, a huge majority of them still explicitly rely on the  BTL part
\citep{alfano2025metalearningobjectivespreferenceoptimization, arpgcvmAG, chen2025bootstrapping, cui2024ultrafeedback, gu2024minillm, gupta2025alphaporewardshape, DBLP:conf/emnlp/HongLT24, jang2025multidimensionalpreferencealignmentconditioning, lee2025phantom, li2025directpreferenceknowledgedistillation, DBLP:conf/emnlp/Lu0AZ0YS24, DBLP:conf/nips/0001X024,DBLP:conf/acl/ParkREF24, DBLP:conf/nips/RameshHCMSBB24, NEURIPS2023_23e6f78b, zhao2025rainbowpo} (and many more).
Those trying to depart from BTL either still gravitate very closely to BTL if they remain within a human choice normative model \cite{chen2025extendingdirectpreferenceoptimization,DBLP:conf/icml/ZhangZWXG25} or they explicitly abandon any human choice affiliation \citep{DBLP:journals/corr/abs-2508-11363}.

The human choice model, which is such a neat grounded component of DPO with BTL, clearly plays more the role of a straitjacket in follow-ups, either constraining those paths that keep human choice grounding, or leading to risky byways when abandoning any link to human choice.

This straitjacket literally vanishes with Theorem \ref{Th-compo-conn}: \textbf{any} applicable $\psi$ defining the key loss in \eqref{defIDPO} according to Theorem \ref{Th-compo-conn} can operate with \textbf{any} human choice model whose choice probabilities have a \klst~structure. The "hidden" component that makes this alchemy to always work is the loss $\tilde{\ve{\ell}}$ in the triptych $(\ve{\ell}, F, \tilde{\ve{\ell}})$. We insist on the fact that it is not just a "for all applicable $\psi$, there exists a related human choice model such that everything works properly" but really "for all applicable $\psi$ and \textit{for all} related human choice models, everything works properly". Thus, any preference optimization scheme in the DPO filiation can completely forget the human choice part and rely only on two properly defined parameters, which, it turns out, are the key analytical parameters for training: 
\begin{itemize}[noitemsep, topsep=0pt, parsep=0pt,partopsep=0pt,align=left]
\item The proper loss $\ve{\ell}$ which defines the reward difference in the final loss \eqref{defIDPO}, \eqref{eqDeltaij};
\item The function $\psi$ which defines the final loss in \eqref{defIDPO}
\end{itemize}

\subsection{No convex loss no cry}

It is not just a question of optimization convenience: Savage's properness framework for real-valued prediction has so far been the ML bedrock of convex losses (e.g. logistic loss) \cite{nnOT,rwCB}. So it comes as a surprise that Theorem \ref{Th-compo-conn} only posits strict monotonicity for $\psi$ and does not impose convexity. In fact, a close look at the proof shows that this is a consequence of the interplay between $F$ and $\tilde{\ve{\ell}}$ in the triptych $(\ve{\ell}, F, \tilde{\ve{\ell}})$. Obviously, the way this small analytical knob operates translates in sizable freedom for ML at a time where almost all variations of DPO still rely on a convex choice for $\psi$.

Savage's properness framework also imposes another property to the final loss: it is monotonic \cite{nnOT,wnAY}. It is of independent interest that Theorem \ref{Th-compo-conn} unlocks non-convexity but still rules out non-monotonic choices for $\psi$, which are in fact core to some DPO extensions \citep{arpgcvmAG}. Interesting but not surprising in our context: non-monotonicity may create misbehaviors for optimization that have been long identified in supervised learning \citep[pp 348]{htfTE}. 

\subsection{Stranger things}

A curious bit of our generalization is that DPO is the \textbf{only} choice properly operating in our broad framework given its constraints. What could be seen as a minor remark is in fact crucial in the context of DPO's influence on its followers. 

Remind (Subsection \ref{cocasy}) that DPO has the constraint $\ell_i = \tilde{\ell}_j, \forall i, j$ and so $\ve{\ell}$ is \textit{separable}: $\ell_i(\ve{p})$ depends only on $p_i$. The following Theorem has been known and shown under various restrictive flavors for almost seven decades.
  \begin{theorem}\label{noSep}
  Suppose that the loss $L$ in \eqref{savageL} is such that $\ve{\ell}$ is separable and $L$ is $\ve{p}$-proper for some target vector $\ve{p} \in \Delta_n$ with non-zero coordinates at indexes $\mathbb{I} \subseteq \{1, 2, ..., n\}$ with $\mathrm{Card}(\mathbb{I})>2$. Then $\ell_i (z) = - K_1 \log (z) + K_2, \forall i \in \mathbb{I}$, where $K_2 \in \mathbb{R}$ and $K_1 > 0$ are constants.
\end{theorem}
(Proof in Appendix, Section \ref{proof-noSep}) This Theorem has crossed almost seven decades under different forms with more restrictive assumption that would have limited its scope in our context: \citep{mMO} mentions an unpublished result imposing all $\ell_i$s to be the same; the proof of \citep[Section 9.4]{sEO} imposes strict properness and differentiability on $\ve{\ell}$, and show the result for vectors $\ve{p}$ partially uniform; \citep[Theorem 2]{bernardoAoS} imposes smoothness, full support and all $\ell_i$s to be the same; \citet{dawidAISM} imposes differentiability and all $\ell_i$s to be the same. While it has the benefit of covering \textit{all} proper losses, our generalization is formulated to insist on the fact that there is no ``lucky'' target $\ve{p}$ for which the loss in \eqref{eqPROPER1} could be different from the KL divergence\footnote{Clearly, $\mathrm{Card}(\mathbb{I})>2$ in RL, even for the simplest of toy problems \citep{10.5555/3692070.3694333}.}. The Theorem also has maximal generality (it breaks when $\mathrm{Card}(\mathbb{I})=2$, see e.g. Table \ref{tab:starpo}).

Theorem \ref{noSep} is important because it shows that departing from KL imposes departing from DPO's blueprint for losses. This has little to zero impact on the papers essentially keeping the KL divergence in Step 1 (Subsection \ref{sec-rlhf}) or DPO's log-ratio policy dependence \citep{arpgcvmAG, DBLP:conf/icml/Chen0CS0GHSC24, sun2025rethinking, 10.5555/3692070.3692574,DBLP:conf/nips/0001X024,DBLP:journals/corr/abs-2305-10425,DBLP:conf/icml/XuSCTSDM024,zhao2025rainbowpo,slocum2025diverse,shao2025earlier,choi2025selfimproving,yang2025weightedreward,xiao2025simper}. However, for others clearly wanting to depart from KL, the risk is real to break properness \citep{alfano2025metalearningobjectivespreferenceoptimization,gupta2025alphaporewardshape}. Separability may have the computational advantage that each coordinate of $\ve{\ell}$ depends only on one action ($n$ can be huge). Fortunately, there is a simple way to depart from KL, keep properness and retain this advantage. It exploits a simple trick well known in supervised learning.
\begin{lemma}\label{lemPROPALL}
For any loss $(\ell_0, \ell_1)$, define $\ve{\ell}^b : \Delta_n \rightarrow \mathbb{R}^n$ as $\ell^b_i (\ve{p}) \defeq \ell_0 (p_i) + \sum_{j\neq i} \ell_1 \left(1-p_j\right)$. If $(\ell_0, \ell_1)$ is (strictly) proper, then $\ve{\ell}^b$ is (strictly) proper.
\end{lemma}
The proof, straightforward, is provided for completeness in Section \ref{proof-lemPROPALL}. It shows $L(\ve{p}, \ve{p}) = \sum_i p_i \ell_0 (p_i) + (1-p_i) \ell_1 \left(1-p_i\right) $, which indeed keeps the advantage set above. ORPO uses this design \citep{DBLP:conf/emnlp/HongLT24}.

\subsection{Beyond the DPO-verse}

\begin{table*}
  \centering
  \resizebox{2.1\columnwidth}{!}{\begin{tabular}{?c?c|c|c?c?c?c?}\Xhline{2pt}
  \multirow{2}{*}{Loss}   & \multirow{2}{*}{$\ell_i(\ve{p})$} & \multicolumn{2}{c?}{proper if} & \multirow{2}{*}{$F(z)$} & \multirow{2}{*}{$\phi^\star(-z)$} & \\  
    &  & $n=2$ ? & $n>2$ ? &  &  & \\ \Xhline{2pt}
        log                           & $-\log(p_i)$ & \tickYes & \tickYes & $1/(1+\exp(-z))$ & $\log(1+\exp(-z))$ & \citep{rsmmefDP}\\ \hline
        binary entropy                           & $-\log(p_i) + \sum_{j\neq i} -\log \left(1-p_j\right)$ & \tickYes & \tickYes & $1/(1+\exp(-z/2))$ & $2\log(1+\exp(-z/2))$ & \citep{DBLP:conf/emnlp/HongLT24}\\ \hline
   square ($\tau>0$) & $ \frac{1}{\tau} \cdot (1-p_i)^2$ & \tickYes & \tickNo & $\max\left\{0,\min\left\{\frac{1 + \tau z}{2}, 1\right\}\right\} $ & $\tau\cdot \left\{
\begin{array}{ccl}
\frac{1}{4\tau} - \frac{{ x}}{\tau} & \mbox{ if } & {x}<0\\
\left({x}-\frac{1}{2\tau}\right)^2 & \mbox{ if } & 0\leq {x}\leq \frac{1}{2\tau}\\
0 & \mbox{ if } & {x}>\frac{1}{2\tau}
\end{array}
                  \right.$& \citep{arpgcvmAG}\\ \hline
        Matushita   ($\mu\geq 0$)                        & $\mu\cdot \sqrt{\frac{1-p_i}{p_i}}$ & \tickYes & \tickNo & $\frac{1}{2} \cdot \left(1 + \frac{x}{\sqrt{x^2+\mu^2}}\right) $ & $\frac{-z+\sqrt{\mu^2 + z^2}}{2}$ & \citep{NEURIPS2023_23e6f78b}\\ \hline
         alpha  ($\beta\geq 0$)                        & $(1-p_i^{-\beta})/\beta$ & \tickYes/\tickNo & \tickYes/\tickNo & \multicolumn{2}{c?}{No general closed form} & \citep{gupta2025alphaporewardshape}\\ \Xhline{2pt}
    \end{tabular}}
    \caption{DPO variants in the \pppo~framework (see text). Each loss may be used either as $\ve{\ell}^a$ or $\ve{\ell}^b$ in \pppo. The properness "regime" is indicated for each loss, either for $n=2$ (typically for $(\ell^a_0, \ell^a_1)$ in \pppo) or $n>2$ (for $\ve{\ell}^b$ in \pppo).}
    \label{tab:starpo}
  \end{table*}
Quite surprisingly perhaps, some approaches that have added new layers to DPO can also be supported by our normative framework \textit{as is}: home advantages, margins and length normalization.\\
\noindent \textbf{Home advantages / margins} have been introduced by \citet{DBLP:conf/nips/0001X024}, whereby function ${\color{blue} \psi}(z)$ in \eqref{defIDPO} includes a slack in its argument, ${\color{blue} \psi}(z-\gamma)$, transposing to DPO the ad-hoc ``home advantage'' introduced in data analysis \citep[pp 438]{aCDA}, with $\gamma$ a so-called margin term. There is no formal justification for those, but it turns out that this "slack" is authorized by the following simple Lemma.
\begin{lemma}\label{lem-ext-prop}
    Let $(\ell_0, \ell_1)$ be proper (resp. strictly proper), and $a>0$ and $c \in \mathbb{R}$ any constants. Then the loss $(\tilde{\ell}_0, \tilde{\ell}_1) \defeq (a \ell_0, a \ell_1 + c)$ is proper (resp. strictly proper) and furthermore, using notations from Theorem \ref{convConj} we have the following key quantities for loss $(\tilde{\ell}_0, \tilde{\ell}_1)$: $\tilde{H}^{-1}(z) = H^{-1}((z+c)/a)$ and $\tilde{\phi}^\star(z) = a \phi^\star((z+c)/a)$.
\end{lemma}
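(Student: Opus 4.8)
The plan is to reduce everything to the pointwise conditional (Bayes) risk and exploit that the transformation is affine in the loss with a strictly positive multiplier. Writing the conditional risk of a scalar proper loss as $L(p,q) = p\,\ell_1(q) + (1-p)\,\ell_0(q)$, I would first substitute the definitions $\tilde{\ell}_0 = a\ell_0$ and $\tilde{\ell}_1 = a\ell_1 + c$ to get
\[
\tilde{L}(p,q) = p\,(a\ell_1(q)+c) + (1-p)\,a\ell_0(q) = a\,L(p,q) + pc.
\]
Because the additive term $pc$ does not depend on the prediction $q$ and $a>0$, the maps $q \mapsto \tilde{L}(p,q)$ and $q\mapsto L(p,q)$ have exactly the same minimizers; hence $\tilde{L}(p,q)\ge \tilde{L}(p,p)$ for all $q$ precisely when $L(p,q)\ge L(p,p)$, with equality only at $q=p$ in the strict case. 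This transfers properness (resp. strict properness) of $(\ell_0,\ell_1)$ to $(\tilde{\ell}_0,\tilde{\ell}_1)$ immediately.

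For the canonical-link quantities I would just propagate the definitions of Theorem \ref{convConj} through the transformation. The subdifferential selection becomes
\[
\tilde{H}(p) = \tilde{\ell}_0(p) - \tilde{\ell}_1(p) = a\ell_0(p) - a\ell_1(p) - c = a\,H(p) - c,
\]
which is consistent with $\tilde{\phi}(p) = -\tilde{L}(p,p) = a\phi(p) - cp$ and $\tilde{H} \in \partial\tilde{\phi}$ (indeed $\tilde{\phi}'(p) = a\phi'(p)-c = aH(p)-c$). Solving $z = aH(p)-c$ for $p$ gives $H(p) = (z+c)/a$, hence $\tilde{H}^{-1}(z) = H^{-1}((z+c)/a)$, the first claimed identity.

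The conjugate is then a one-line computation from the variational definition:
\[
\tilde{\phi}^\star(z) = \sup_{p}\bigl(pz - \tilde{\phi}(p)\bigr) = \sup_{p}\bigl(p(z+c) - a\phi(p)\bigr) = a\,\sup_{p}\Bigl(p\tfrac{z+c}{a} - \phi(p)\Bigr) = a\,\phi^\star\!\Bigl(\tfrac{z+c}{a}\Bigr),
\]
where pulling $a>0$ out of the supremum is the only place positivity of $a$ is used. As a consistency check one verifies $\tilde{\ell}_0\circ\tilde{H}^{-1}(z) = a\,\ell_0(H^{-1}((z+c)/a)) = a\,\phi^\star((z+c)/a) = \tilde{\phi}^\star(z)$, matching part (ii) of Theorem \ref{convConj}. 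There is no real obstacle: the entire content is that an affine rescaling of a proper loss with positive slope leaves argmins untouched and conjugates by the standard rule $(a\phi(\cdot)-c\,(\cdot))^\star(z)=a\phi^\star((z+c)/a)$. The one point to watch is keeping the asymmetric convention $\tilde{\ell}_1 = a\ell_1 + c$ straight, so that the constant $c$ survives inside $H = \ell_0-\ell_1$ rather than cancelling, which is exactly what produces the $(z+c)/a$ argument.
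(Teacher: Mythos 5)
Your proof is correct, and it is exactly the routine verification the paper had in mind: the lemma is stated there as a ``trivial Lemma (proof omitted),'' and your argument --- properness via $\tilde{L}(p,q) = aL(p,q) + pc$ with the $q$-independent shift leaving minimizers untouched, then $\tilde{H}(p) = aH(p) - c$ and the standard conjugate rule giving $\tilde{H}^{-1}(z) = H^{-1}((z+c)/a)$ and $\tilde{\phi}^\star(z) = a\phi^\star((z+c)/a)$ --- is the natural way to fill it in. The only cosmetic remark is that where you write $\tilde{\phi}'(p) = a\phi'(p) - c$ you should phrase it with subdifferential selections ($H \in \partial\phi$ gives $aH(\cdot) - c \in \partial\tilde{\phi}$), since the paper deliberately avoids assuming differentiability; the identity and the rest of your argument go through verbatim.
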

(Proof in  Appendix, Section \ref{proof-lem-ext-prop}) Hence the loss setting of \citep{DBLP:conf/nips/0001X024} falls into \pppo~as long as $c/a$, which is the $\gamma$ term in SimPO, is non-negative for the stochastic choice model to be \klst~ compliant. This is the case in all the papers we have spotted using the margin term \citep{DBLP:conf/nips/0001X024,zhao2025rainbowpo}. This can also support to some extent techniques that would \textit{tune} the slack \citep{cui2024ultrafeedback}, i.e. replacing constants $a,c$ by functions.\\
\noindent \textbf{Length normalization} has been introduced in several ways\footnote{The usual ad-hoc justification is reward modification \citep{DBLP:conf/acl/ParkREF24}.} to express the fact that $y_i$ is the "level" at which preferences are formulated but $\pi(y_i|.)$ is a compound probability aggregating those of "elementary" actions $y_{i,k}, k = 1, 2, ..., n_i$ --- tokens in the jargon of LLMs \citep{DBLP:conf/nips/0001X024,gu2024minillm,DBLP:conf/acl/ParkREF24,li2025directpreferenceknowledgedistillation}. No such dependence is reflected in any of the three steps in Subsections \ref{sec-rlhf}, \ref{sec-link} and \ref{sec-sub3}. We show how the framework of Subsection \ref{sec-rlhf} also supports the popular corrections in \citet{DBLP:conf/nips/0001X024,gu2024minillm,DBLP:conf/acl/ParkREF24,li2025directpreferenceknowledgedistillation}. We first observe that for autoregressive models, we have 
\begin{eqnarray}
  \mbox{$\pi(y_i |x) =  \prod_{k=1}^{n_i} \pi(y_{i,k} | y_{i,<k}, x), \forall i = 1, 2, ..., n_i.$} \label{depyyk}
\end{eqnarray}
Suppose we wish to compute some $\tilde{\pi} (y_{i}|x)$ that would satisfy two constraints: (a) it is the \textit{best probability approximation} to all $\{\pi(y_{i,k} | y_{i,<k}, x)\}_{i=1}^{n_i}$, $\tilde{\pi} (y_{i}|x)$ and (b) it has the form $\tilde{\pi} (y_{i}|x) = u(\pi(y_i |x))$ via \eqref{depyyk}, for some $u$ to specify that would carry some knowledge about $\{\pi(y_{i,k} | y_{i,<k}, x)\}_{i=1}^{n_i}$ -- the simplest of which being the size, $n_i$. Plugging $\tilde{\pi} (y_{i}|x) = u(\pi(y_i |x))$ in \eqref{eqPF22} would then carry this knowledge at zero cost in the DPO pipeline. Formulating (a) is just a matter of reusing \eqref{piloss-GGen} by using $\pi(y_{..}|y_{..}, x)$ as reference policy $\piref$, assuming all rewards equal --- and of course after Theorem \ref{thSTP}, using a Bregman divergence $D_{\phi,\ve{G}}$ as criterion so we end up looking at the argument probability maximizing  (for all $i=1, 2, ..., n$)
\begin{eqnarray}
\lefteqn{J_{\phi} (\tilde{\pi}(y_i|x))}\nonumber\\
& \defeq & - \sum_{k=1}^{n_i} D_{\phi, \ve{G}} (\tilde{\pi} (y_{i}|x) \| \pi(y_{i,k}|y_{i,<k}, x)). \label{piloss-GGenPIJ}
\end{eqnarray}

\begin{lemma}\label{LemCORL}
Take $\phi(\ve{p}) \defeq \sum_i p_i \log p_i$ in \eqref{piloss-GGenPIJ} (KL divergence). Then the solution to \eqref{piloss-GGenPIJ} satisfies
\begin{eqnarray*}
\log \tilde{\pi}(y_i|x) & = & (1/n_i) \cdot \log \pi(y_i |x),
\end{eqnarray*}
which is the approach of \citet{DBLP:conf/nips/0001X024}. If instead one picks $\phi(\ve{p}) \defeq \sum_i - \log p_i$ (Itakura-Saito divergence) then the solution to \eqref{piloss-GGenPIJ} satisfies this time
\begin{eqnarray}
\forall i, \exists \alpha_i > 0: \log \tilde{\pi}(y_i|x) = \log \pi(y_i |x) + \alpha_i n_i, \label{eqDEPIS}
\end{eqnarray}
i.e. the approach of \citet{DBLP:conf/acl/ParkREF24, chen2025bootstrapping}.
  \end{lemma}
  The proof, Appendix Section \ref{proof-LemCORL}, makes $\alpha_i$ in \eqref{eqDEPIS} more precise and suggests an eventual bias in the formula, with a simple suggestion to debias it.
  
\subsection{Across the \klst-verse}

%\newcommand\noteRN[1]{\textcolor{red}{RN - #1}}

%\noteRN{Brice, reviewer T1bk: the nice subtlety of no-choice vs abstention + KLST* vs RUT}

Our human choice model allows for abstention. %when the human evaluator has to choose between two options.  
In human choice theory or applied econometrics, the usual approach is instead to introduce an ``outside option'' (as a new element in $\mathcal{Y}$), which allows for the derivation of a reservation utility/reward \citep[][Chap. 2]{Train09}. However, the context in which we operate is different: No-choice between two options reflects a local incomparability rather than global rejection (where neither option reaches the reservation utility threshold). The advantage of our local structure is analytical: rather than complicating the framework with a third global and intransitive utility, it “absorbs” abstention behavior directly into the choice probabilities.

BTL is closely linked to Random Utility Theory (RUT), according to which the utility of an option is a random variable, composed of a deterministic component and a stochastic error term. When these random variables are assumed to be independent and follow a specific distribution (i.e. Gumbel), the RUT framework naturally leads to BTL. Our axiomatic framework also yields to a monotonic utility function, but without relying on explicit error terms. A key advantage is that a utility representation is obtained without being constrained by the strict symmetry of sum-equality-to-one imposed by standard RUT models with Gumbel noise.

\subsection{It's a small world (but it doesn't have to be)}

We make more concrete the fact that many variations of DPO are in fact very close to DPO at the scale of the "territory" that we cover. 
    \begin{definition}\label{def-pppo}
      For any losses $(\ell^a_0, \ell^a_1)$ (strictly proper) and $\ve{\ell}^b$ (proper), the \pppo~(Proper-Proper Preference Optimization) framework is the setting of the following parameters in Theorems \ref{thSTP}, \ref{Th-KLST-Ext} and \ref{convConj}: (i) $\ve{\ell} \defeq \ve{\ell}^b$ in Theorem \ref{thSTP}, (ii) $F \defeq (\ell^a_0(p) - \ell^a_1(p))^{-1}$ in Theorem \ref{Th-KLST-Ext}, (iii) $(\ell_0, \ell_1) \defeq (\ell^a_0, \ell^a_1)$ in Theorem \ref{convConj}.
      \end{definition}
\pppo~is the canonical connection studied in Subsection \ref{cocasy}. 
  Table \ref{tab:starpo} displays that prominent examples of DPO variations belong to this \pppo~setting. RRHF keeps $\ve{\ell}^b$ from DPO but uses Matushita for $\ve{\ell}^a$ ($\lim_{\mu \rightarrow 0} \phi^\star(-z) = \max\{0,-z\}$). ORPO \citep{DBLP:conf/emnlp/HongLT24} is an interesting case which keeps $\ve{\ell}^a$ from DPO but uses the binary entropy for $\ve{\ell}^b$. IPO \citep{arpgcvmAG} is also interesting because, while it keeps $\ve{\ell}^b$ from DPO but picks the square loss for $\ve{\ell}^a$, scraps the asymptotes from the convex surrogate. Finally, AlphaPO is also interesting as it keeps DPO's $\ve{\ell}^a$ but replaces the log-loss in $\ve{\ell}^b$ by a tempered variant \citep{nawBW} which can break properness \cite{snsBP}.

\section{Toy experiment}

\begin{figure}
  \centering
  \resizebox{\columnwidth}{!}{\begin{tabular}{cc}
  \includegraphics[trim=10bp 10bp 100bp 30bp,clip,width=0.28\columnwidth]{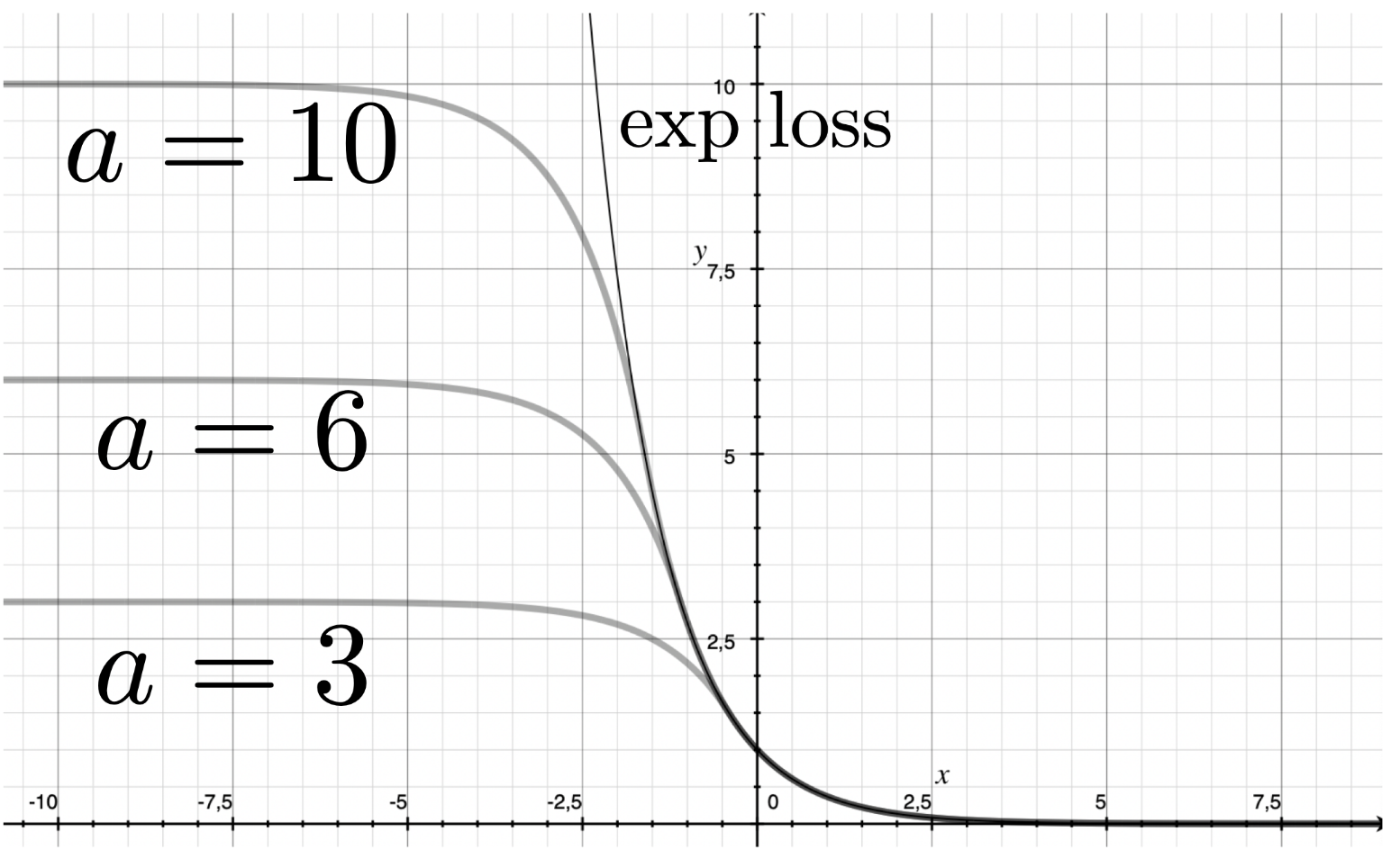}  & \begin{tabular}{r}
  \vspace{-1.7cm}\\
  \resizebox{0.21\columnwidth}{!}{\begin{tabular}{r|r}\Xhline{2pt}
  $a$ & win\% (us)\\ \Xhline{2pt}
  $10$ & 44.60\% \\ \Xhline{2pt}
  $6$ & \textbf{54.50\%} \\ \Xhline{2pt}
  $3$ & \textbf{53.00\%} \\ \Xhline{2pt}
    \end{tabular}}\end{tabular}\end{tabular}}
    \caption{Left: $\psi_a$ \eqref{defPSIA} for the three values of $a$ considered against exponential loss.; Right: comparison of training with $\psi_a$ vs exponential loss. A value $>$50\% means $\psi_a$ wins (see text).}
    \label{fig:compa}
    \vspace{-0.5cm}
  \end{figure}

Most of the map that we uncover is still blank in terms of SOTA, and it cannot be the purpose of this (or even just one) paper to show how to "navigate" it experimentally. However, as a toy illustration, we dig at one example "outlier" location. Consider the following loss: 
\begin{eqnarray}
\lefteqn{\psi_a(z)}\nonumber\\
& \defeq (z < \log(2/a)) \: ?\: a - (a^2/4) \cdot \exp(z) : \exp(-z) \label{defPSIA}
\end{eqnarray}
$\lim_{a \rightarrow -\infty} \psi_a(z) = \exp(-z)$ so $\psi_a$ can encode the exponential loss. Otherwise, $\psi_a$ is non-convex and balances the strong convexity of the exponential loss ($a = -\infty$) with Lipschitzness for $a \neq -\infty$. We use model gemma2\_2b\_it and compare three finite values of $a$ to the exponential loss. Figure \ref{fig:compa} presents results obtained for three values of $a$, along with losses' plots (tests on Alpaca Eval v2; rater = gemini-2.5-flash-lite; full details and bigger plots in Appendix, Section \ref{sec-sup-expes}), averaged over two runs. We observe that non-convexity can bring a leverage, which encourages a broader experimental coverage of the topic.
\section{Conclusion: to be normative (or not to be)}
% Much ado about something

We show that DPO is a particular point in a huge panorama of losses and preference models, that everywhere keeps all of DPO's neat design tricks and properties and complies \textit{in extenso} with a solid normative bedrock. There is thus a large unexplored map to explore for substantial variations to the specific DPO choices, and this map can accommodate a number of additional requirements that have been formulated in the realm of LLMs (it naturally embeds home advantages, margins and length correction; it is not affected by "orthogonal" extensions to DPO that would deal with marginal/posterior optimization, noise handling, time-varying preferences, etc., that can be used without change). Given the variety of environments, requirements, data specificities of LLM training, development, and deployment, it is important to frame the full freedom offered, to then be able to make choices without losing any of its advantages. Such freedom is mandatory to be able escape eventual pitfalls of specific choices \citep{DBLP:journals/corr/abs-2508-11363}.

Normative frameworks have been crucial to safeguard broad pans of ML (supervised learning, \citet{rwCB,rwID}, unsupervised learning, \citet{bmdgCWj,tslAG}, etc.). Also, ML-at-large is facing renewed scrutiny \citep{DBLP:journals/corr/abs-2506-19882} but playing a race against time: our paper is the first to unlock the full potential of DPO's normative framework, yet alternatives to DPO's normative framework have been proposed \textit{even before} such a full understanding was available \citep{Zhi_Xuan_2024}. 

Yet a normative theory can only deliver on its premises and it would be quite impossible to extend our results using Savage's theory. What this means is that some DPO variants definitely escape our framework \citep{huang2025correcting,wang2024beyond} -- even when they can be very close\footnote{For example, since $\log(z) = \log(z-1+1) \sim_{1} (z-1) - (z-1)^2/2 + o((z-1)^2) $, we derive that $\chi^2$ divergence is a 2nd-order approximation of KL divergence \citep{huang2025correcting}.} -- and signals an interesting open problem.

\section*{Acknowledgments}

The authors warmly thank Robert C. Williamson and the reviewers (especially reviewer T1bk) for many comprehensive, technical comments and discussions about this material that helped to improve its content.
\section*{Impact statement}

This paper presents theoretical work whose goal is to advance the field of machine learning. A priori, we feel that our main result -- that preference optimization is disentangled in human choice -- may have a positive impact by reducing the risk of bias that can come from picking a specific \textit{model} of human choice to craft a DPO-style algorithm. As we get "rid of" this need, the focus for potential impact shifts from being analytical (e.g. "what is the impact of the function replacing DPO's sigmoid ?") to being normative: what is the impact of the assumptions' bedrock upon which our work relies ? Savage's properness has been known, adopted and used in ML for decades. We feel that interesting lessons can come from the assumptions defining a \klst~structure, especially when it comes to fairness and biases for sensitive domains, so our theoretical work could have a concrete impact on experimental designs touching on human behaviour. But this is beyond the scope of our paper.

%\newpage
%\old{Old "mille-feuilles" paper:}
%\newpage

%\input{content-old/introduction}
%\input{content-old/definitions}
%\input{content-old/generalization}
%\input{content-old/discussion}
%\input{content-old/conclusion}

\bibliography{bibgen}
\bibliographystyle{icml2026}

%%%%%%%%%%%%%%%%%%%%%%%%%%%%%%%%%%%%%%%%%%%%%%%%%%%%%%%%%%%%%%%%%%%%%%%%%%%%%%%
%%%%%%%%%%%%%%%%%%%%%%%%%%%%%%%%%%%%%%%%%%%%%%%%%%%%%%%%%%%%%%%%%%%%%%%%%%%%%%%
% APPENDIX
%%%%%%%%%%%%%%%%%%%%%%%%%%%%%%%%%%%%%%%%%%%%%%%%%%%%%%%%%%%%%%%%%%%%%%%%%%%%%%%
%%%%%%%%%%%%%%%%%%%%%%%%%%%%%%%%%%%%%%%%%%%%%%%%%%%%%%%%%%%%%%%%%%%%%%%%%%%%%%%
\newpage
\appendix
\onecolumn

\renewcommand\thesection{\Roman{section}}
\renewcommand\thesubsection{\thesection.\arabic{subsection}}
\renewcommand\thesubsubsection{\thesection.\thesubsection.\arabic{subsubsection}}

\renewcommand*{\thetheorem}{\Alph{theorem}}
\renewcommand*{\thelemma}{\Alph{lemma}}
\renewcommand*{\thecorollary}{\Alph{corollary}}

\renewcommand{\thetable}{A\arabic{table}}

%\input{content-old/appendix}
%\begin{appendices}
%\input{content-old/appendix} 
% \end{appendices}

\begin{center}
\Huge{\supplementLong}
\end{center}

To
differentiate with the numberings in the main file, the numbering of
Theorems, etc. is letter-based (A, B, ...).

\section*{Table of contents}

\noindent \textbf{Supplementary material on \klst} \hrulefill Pg \pageref{sec-sup-klst}\\
\noindent $\hookrightarrow$ Illustration of the Monotonicity Assumption \hrulefill Pg \pageref{sec-sup-mon}\\
\noindent $\hookrightarrow$ Illustration of Assumption \textbf{ZA$\wedge$P$\Rightarrow$ZA}  \hrulefill Pg \pageref{sec-sup-zapza}\\

\noindent \textbf{Supplementary material on proofs} \hrulefill Pg \pageref{sec-sup-proofs}\\
\noindent $\hookrightarrow$ Proof of Theorem \ref{thSTP} \hrulefill Pg \pageref{proof-thSTP}\\
\noindent $\hookrightarrow$ Proof of Theorem \ref{Th-KLST-Ext} \hrulefill Pg \pageref{proof-Th-KLST-Ext}\\
\noindent $\hookrightarrow$ Proof of Theorem \ref{Th-compo-conn} \hrulefill Pg \pageref{proof-Th-compo-conn}\\
\noindent $\hookrightarrow$ Proof of Theorem \ref{convConj} \hrulefill Pg \pageref{proof-convConj}\\
\noindent $\hookrightarrow$ Proof of Theorem \ref{noSep} \hrulefill Pg \pageref{proof-noSep}\\
\noindent $\hookrightarrow$ Proof of Lemma \ref{lemPROPALL} \hrulefill Pg \pageref{proof-lemPROPALL}\\
\noindent $\hookrightarrow$ Proof of Lemma \ref{lem-ext-prop} \hrulefill Pg \pageref{proof-lem-ext-prop}\\
\noindent $\hookrightarrow$ Proof of Lemma \ref{LemCORL} \hrulefill Pg \pageref{proof-LemCORL}\\

\noindent \textbf{Supplementary material on toy experiment} \hrulefill Pg \pageref{sec-sup-expes}\\

\newpage

\section{Supplementary material on \klst}\label{sec-sup-klst}

\subsection{Illustration of the Monotonicity Assumption}\label{sec-sup-mon}

\begin{figure*}
  \centering
  \includegraphics[trim=100bp 140bp 200bp 50bp,clip,width=1.0\columnwidth]{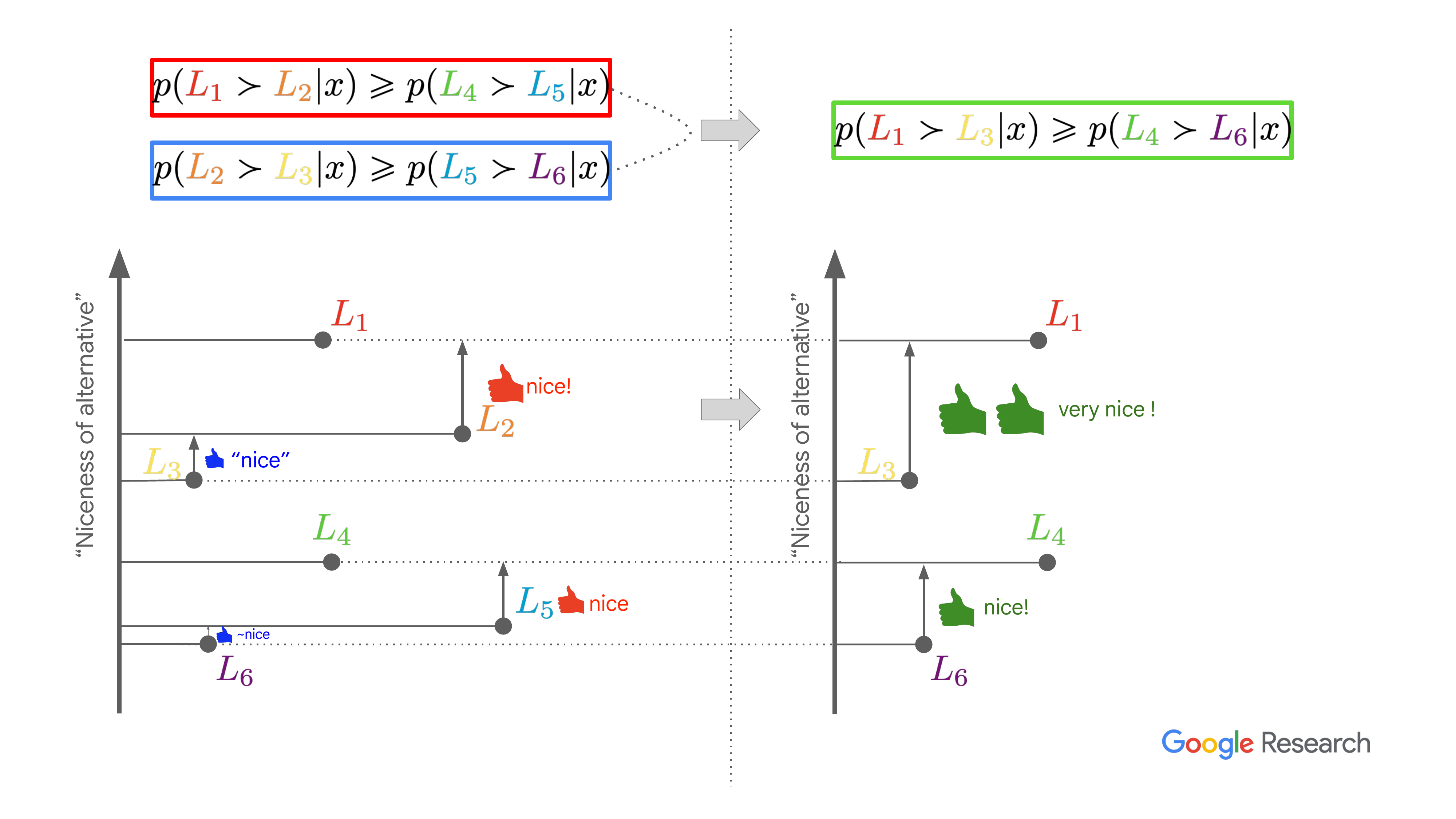} 
  \caption{The monotonicity property implicitly creates a real valuation of the "niceness" of alternatives in $\mathcal{Y}^\alpha$. For example, computing $p(\lone \succ \ltwo | x)$ amounts to making a difference between the mappings of $\lone$ and $\ltwo$ (left, in {\color{red} red}). The inequality in (iii), shown in the {\color{red} red} rectangle, establishes an order between the related differences along the axis (left), and similarly for the  {\color{red} blue} rectangle. The mapping then authorizes to compare new related differences, hence probabilities, and derive the right proposition in \eqref{defmon} (main file).}
    \label{fig:mono}
  \end{figure*}

Figure \ref{fig:mono} shows what monotonicity creates "behind the curtain" a mapping for the elements in $\mathcal{Y}^\alpha$ on a real axis, representing their "niceness" and allowing for a simple comparison of choice probabilities. We have adopted the same polarity for the difference of the valuations (all arrows go up), but the picture would lead to the same conclusion had we flipped some of them. Indeed, denote for short $\zone, \ztwo, ..., \zsix$ the "hypothetical" coordinates of the lotteries. Then the antecedent of monotonicity on choice probabilities (Figure \ref{fig:mono}) also read:
\begin{eqnarray*}
\zone - \ztwo & \geq & \zfour - \zfive,\\
\ztwo - \zthree & \geq & \zfive - \zsix.
\end{eqnarray*} 
and simple algebra yields after summing those:
\begin{eqnarray*}
\zone - \zthree & \geq & \zfour - \zsix,
\end{eqnarray*}
which is the consequent of the monotonicity condition on choice probabilities.

\subsection{Illustration of Assumption \textbf{ZA$\wedge$P$\Rightarrow$ZA}}\label{sec-sup-zapza}

Figure \ref{fig:zapza} presents a simple illustration for assumption \textbf{ZA$\wedge$P$\Rightarrow$ZA}: an edge, e.g. $(\lone, \ltwo)$, indicates no abstention among lotteries $\lone$ and $\ltwo$, so one must be preferred to the other one ("$\triangleright_x$"). In a wedge (left of Figure \ref{fig:zapza}), if it happens that the central node (here, $\lone$) is preferred to both other lotteries -- or both other lotteries are preferred to it --, then it is reasonable to think that the rational mechanisms entailing those preferences without abstention would at least prevent abstention if / when choosing between the wedge's extreme nodes (here, $\ltwo$ and $\lthree$), so $(\ltwo, \lthree) \in \mathcal{E}^\alpha_x$. However, it does not necessarily allow to find out which of the two lotteries would be preferred to the other one, hence there is no assumption on the polarity of the preference relationship between the two lotteries.

\begin{figure*}
  \centering
  \includegraphics[trim=80bp 45bp 80bp 30bp,clip,width=1.0\columnwidth]{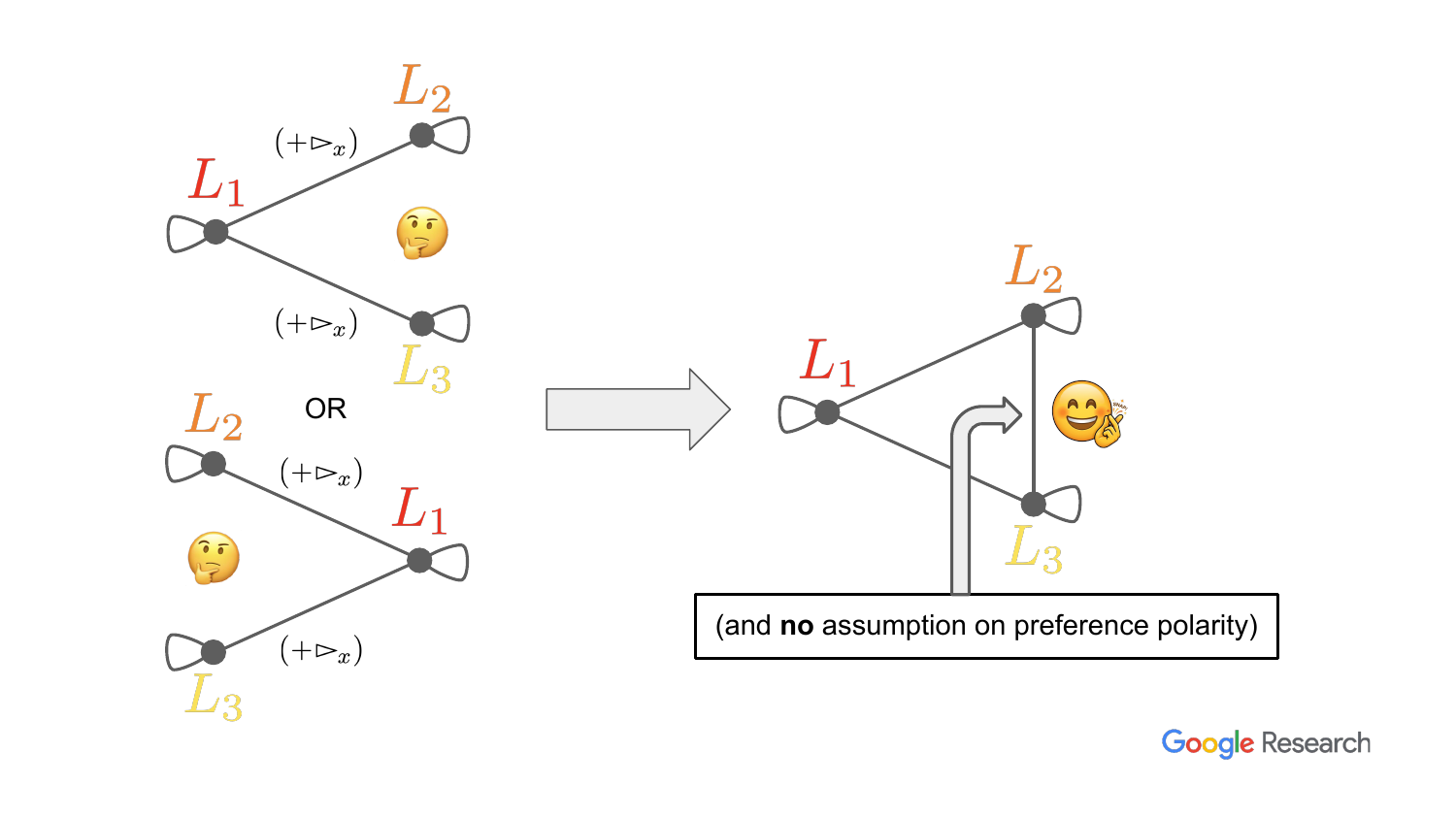} 
  \caption{An illustration of assumption \textbf{ZA$\wedge$P$\Rightarrow$ZA} in graph $G^\alpha_x$ (loops indicate bearability assumption). On the left part, the thinking emoji indicates eventual abstentions in the choice $\ltwo$ vs $\lthree$, which are in fact resolved if it happens that the polarity of preferences in the wedge with respect to the central node of the wedge are the same (right part of the figure). See text for details.}
    \label{fig:zapza}
  \end{figure*}

\newpage
\section{Supplementary material on proofs}\label{sec-sup-proofs}

  \subsection{Proof of Theorem \ref{thSTP}}\label{proof-thSTP}

\citet[Proposition 7]{JMLR:v17:14-294} shows the standard connection: $Q$ being proper, it necessarily satisfies
  \begin{eqnarray}
    Q(\pi(.|x) \| \pi'(.|x)) & = & K(\pi(.|x)) + D_{\phi,G}(\pi(.|x) \| \pi'(.|x)), \label{defQ}
  \end{eqnarray}
  where $D_{\phi,G}$ is a Bregman divergence. They also show that $K(\pi(.|x)) = Q(\pi(.|x) \| \pi(.|x))$, which leads to the expression of the regret, satisfying by definition:
\begin{eqnarray*}
  R(\pi(.|x) \| \pi'(.|x)) & \defeq & Q(\pi(.|x) \| \pi'(.|x)) - Q(\pi(.|x) \| \pi(.|x)).
  \end{eqnarray*}
However, we need to show one additional bit to fully prove Theorem \ref{thSTP}, namely that $-\ve{\ell} \in \partial - L$, which is simple but yet needs to be shown for completeness. Properness condition imposes $\ve{p}^\top \ve{\ell}(\ve{p}) \leq \ve{p}^\top \ve{\ell}(\ve{q})$ for all $\ve{p}, \ve{q} \in \Delta_n$. Put all on the right hand side and add there $-L(\ve{q}, \ve{q})+L(\ve{q}, \ve{q})$. Expand with the definition of $L$ and it yields:
  \begin{eqnarray*}
-\ve{p}^\top \ve{\ell}(\ve{p}) - (-\ve{q}^\top \ve{\ell}(\ve{q})) - (\ve{p} - \ve{q})^\top \ve{\ell}(\ve{q}) & \geq & 0,
  \end{eqnarray*}
  which implies indeed $\ve{\ell} \in \partial - L$. The convexity of $-L(\ve{q}, \ve{q})$ was originally shown in \citet{Gneiting01032007}.

  \subsection{Proof of Theorem \ref{Th-KLST-Ext}}\label{proof-Th-KLST-Ext}

  We exploit a classical result of \citet[Chapter 17, Theorem 2]{klstFO}, in two steps: we first show that the local choice structure and monotonicity of lotteries extend to alternatives of $\mathcal{Y}$ (i.e. for $\alpha = \{0, 1\}$) and then we show that considering lotteries allows to bypass the use of a quite non intuitive \textit{solvability} axiom (we separately note that it also appears in \citet{Debreu58}'s theory, thus when zero abstention is enforced). 

  Let us start by step one. We cover the properties involved in the local choice structure and monotonicity for lotteries and show that they \textbf{specialize} to the lotteries atomic components, i.e. they hold for $\mathcal{Y}$ as well. The following simple Lemma is central to our results. It actually contains more results than strictly needed for our purpose as those shall be used in a discussion comparing the original KLST model to our lottery model. One can infer from the definition of zero abstention that it is a stronger notion than preference (zero abstention implies a preference from its definition \eqref{defG}, but the converse is not necessarily true). The Lemma shows such a property holds at the "deeper'' stage of alternatives. 
\begin{lemma}\label{lemDECOMP}
  $\forall \yone, \ytwo, \ythree, \yfour \in \mathcal{Y}$, the following holds
  \begin{itemize}
    \item [(I)] on preferences (\textbf{P}):
\begin{eqnarray}
  \forall \alpha\in (0,1): (\yone \ytwo)_\alpha \triangleright_x (\ythree \yfour)_\alpha & \Rightarrow & (\yone \triangleright_x \ythree) \wedge (\ytwo \triangleright_x \yfour), \label{eqEEx2F}\\
  \exists \alpha\in (0,1): (\yone \ytwo)_\alpha \triangleright_x (\ythree \yfour)_\alpha & \Leftrightarrow & (\yone \triangleright_x \ythree)\vee (\yone \triangleright_x \yfour)\vee (\ytwo \triangleright_x \ythree)\vee (\ytwo \triangleright_x \yfour). \label{eqEEx2}
\end{eqnarray}
Furthermore, if among the four alternatives / actions in \eqref{eqEEx2}, three are identical and \textbf{Bearability} holds, then the equivalence holds with $\triangleright_x$ between the two distinct elements on the right proposition: for example, if $y \defeq \yone = \ytwo = \ythree \neq \yfour$, then the RHS is $y \triangleright_x \yfour$;
\item [(II)] on zero abstention (\textbf{ZA}):
\begin{eqnarray}
    \exists \alpha\in (0,1):  ((\yone \ytwo)_\alpha, (\ythree \yfour)_\alpha) \in \mathcal{E}_x^\alpha & \Leftrightarrow & ((\yone, \ythree) \in \mathcal{E}_x)\wedge ((\yone, \yfour) \in \mathcal{E}_x) \nonumber\\
     & & \wedge ((\ytwo, \ythree) \in \mathcal{E}_x) \wedge ((\ytwo, \yfour) \in \mathcal{E}_x)\label{eqEEx}.
\end{eqnarray}
\end{itemize}
  \end{lemma}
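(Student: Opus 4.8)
The plan is to route everything through one algebraic identity. Applying expandability \eqref{debreuL} twice — once to break the left lottery, once the right — I would write, with $g(\alpha)\defeq p((\yone\ytwo)_\alpha\succ(\ythree\yfour)_\alpha|x)$,
\[ g(\alpha)=\alpha^2 a+\alpha(1-\alpha)(b+c)+(1-\alpha)^2 d, \]
where $a\defeq p(\yone\succ\ythree|x)$, $b\defeq p(\yone\succ\yfour|x)$, $c\defeq p(\ytwo\succ\ythree|x)$, $d\defeq p(\ytwo\succ\yfour|x)$. The four weights $\alpha^2,\alpha(1-\alpha),\alpha(1-\alpha),(1-\alpha)^2$ sum to $1$ and are strictly positive on $(0,1)$, so $g(\alpha)$ is a genuine convex combination of the four atomic choice probabilities. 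This identity, together with its image under swapping the two lotteries, drives all three claims.

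For the zero-abstention equivalence \eqref{eqEEx}, I would add $g(\alpha)$ to the analogous expansion of $p((\ythree\yfour)_\alpha\succ(\yone\ytwo)_\alpha|x)$; the mixed terms regroup pairwise and the total becomes $\alpha^2 s_{13}+\alpha(1-\alpha)(s_{14}+s_{23})+(1-\alpha)^2 s_{24}$, where $s_{13}\defeq p(\yone\succ\ythree|x)+p(\ythree\succ\yone|x)$ is the non-abstention mass of that pair and $s_{14},s_{23},s_{24}$ are defined likewise. By \eqref{defG} each $s_{ij}\le 1$, with equality exactly when the pair lies in $\mathcal{E}_x$. Since for $\alpha\in(0,1)$ the weights are positive and sum to one, this convex combination equals $1$ — i.e. $((\yone\ytwo)_\alpha,(\ythree\yfour)_\alpha)\in\mathcal{E}^\alpha_x$ — iff every $s_{ij}=1$, which is exactly the fourfold atomic-edge conjunction; and this holds for every, hence some, $\alpha\in(0,1)$. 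For \eqref{eqEEx2F} continuity finishes the job: $g$ is a polynomial in $\alpha$, so if $g(\alpha)\ge 1/2$ for all $\alpha\in(0,1)$, letting $\alpha\to1^-$ yields $a\ge1/2$ and $\alpha\to0^+$ yields $d\ge1/2$, i.e. $(\yone\triangleright_x\ythree)\wedge(\ytwo\triangleright_x\yfour)$.

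For the existential equivalence \eqref{eqEEx2}, the direction from lotteries to atoms is immediate: if all of $a,b,c,d$ were below $1/2$ then $g(\alpha)<1/2$ for every $\alpha$, so a witnessing $\alpha$ forces at least one atomic preference. The converse is the crux. When the witnessing atomic preference is one of $a,d$ and is strict, concentrating $\alpha$ near the matching endpoint and using continuity of $g$ produces an interior witness; the special three-identical case is a clean check of the mechanism, since with $y\defeq\yone=\ytwo=\ythree\neq\yfour$ bearability forces $p(y\succ y|x)=1/2$ and the master identity collapses to $g(\alpha)=\alpha/2+(1-\alpha)p(y\succ\yfour|x)$, whence $g(\alpha)\ge1/2\Leftrightarrow y\triangleright_x\yfour$ for every $\alpha\in(0,1)$, matching the claimed reduction to the two distinct elements.

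The main obstacle is precisely the remaining converse cases, where a bare convex-combination bound is not enough: for instance $a=d=0$, $b=c=1/2$ satisfies the right-hand disjunction yet gives $\max_\alpha g(\alpha)=1/4<1/2$, so no $\alpha$ witnesses a lottery preference. Such configurations must therefore be ruled out, and this is where I expect to lean on the ambient LCS and, crucially, monotonicity (Definition \ref{lottery-Mono}): by instantiating its triangles and wedges on $\{\yone,\ytwo,\ythree,\yfour\}$ one shows that a preference pattern mixing boundary values ($0$ or $1/2$) with "crossed" preferences contradicts the ordering of preference differences that monotonicity enforces. Discharging these degenerate cases cleanly, rather than the routine algebra of the master identity, is where the real work of the lemma lies.
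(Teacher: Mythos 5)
Your proposal follows the paper's proof line for line on three of the four claims: your master identity is exactly the paper's expansion \eqref{defP1234}; your endpoint limits $\alpha\to 1^-$ and $\alpha\to 0^+$ for \eqref{eqEEx2F} are the paper's contrapositive (if $p(\yone\succ\ythree|x)<1/2$ then $g(\alpha)<1/2$ for $\alpha$ near $1$, and symmetrically near $0$); your pairwise regrouping into the sums $s_{ij}$ for \eqref{eqEEx} is the paper's computation (one nit: $s_{ij}\leq 1$ comes from the abstention semantics, not from \eqref{defG} itself, which only defines when $s_{ij}=1$ --- the paper leans on the same implicit bound); and your collapsed identity $g(\alpha)=\alpha/2+(1-\alpha)\,p(y\succ\yfour|x)$ in the three-identical case is the paper's ``furthermore'' argument verbatim, Bearability pinning the two coincident probabilities to $1/2$.

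The genuine gap is the one you flag yourself: the $\Leftarrow$ direction of \eqref{eqEEx2} when the witnessing atomic preference is $b$ or $c$, or is a non-strict $a$ or $d$. You do not prove it --- ``lean on LCS and monotonicity'' is a hope, not an argument, and you exhibit no instantiation of Definition \ref{lottery-Mono} ruling out your configuration $a=d=0$, $b=c=1/2$ (for which $\max_\alpha g(\alpha)=1/4<1/2$). So as a standalone proof the proposal is incomplete. You should know, however, that the paper's own proof does not close this either: its entire treatment of \eqref{eqEEx2} is the remark that $\alpha^2+2\alpha(1-\alpha)+(1-\alpha)^2=1$, which delivers the $\Rightarrow$ direction (a convex combination reaching $1/2$ forces some term to reach $1/2$) and nothing visibly more; your counterexample actually demonstrates that no completion by the algebra alone is possible, so the biconditional as stated needs either extra structural input or a weakening. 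The saving grace, for you and for the paper alike, is that the only instances of the $\Leftarrow$ direction of \eqref{eqEEx2} used downstream in the proof of Theorem \ref{Th-KLST-Ext} are the three-identical cases covered by the ``furthermore'' clause (e.g., deriving $L_1\triangleright_x L_2$ from $\yone\triangleright_x\ytwo$ in the specialization of ZA$\wedge$P$\Rightarrow$ZA), which you prove completely; the general $\Leftarrow$ is never load-bearing.
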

  \begin{proof}
To prove \eqref{eqEEx2F}, we decompose using \eqref{debreuL}
\begin{eqnarray}
      p((\yone \ytwo)_\alpha \succ (\ythree \yfour)_\alpha |x)  & = & \alpha^2 p(\yone \succ \ythree |x) + \alpha(1-\alpha) p(\yone \succ \yfour |x) \nonumber\\
      &  & +  \alpha(1-\alpha) p(\ytwo \succ \ythree |x) +  (1-\alpha)^2 p(\ytwo \succ \yfour |x),\label{defP1234}
\end{eqnarray}
and we have $p((\yone \ytwo)_\alpha \succ (\ythree \yfour)_\alpha |x) \geq 1/2$. If $p(\yone \succ \ythree |x) < 1/2$, then there are values of $\alpha$ close to 1 such that $p((\yone \ytwo)_\alpha \succ (\ythree \yfour)_\alpha |x) < 1/2$, and that is the same if $p(\ytwo \succ \yfour |x) < 1/2$ for values of $\alpha$ close to 0. Hence, we must have $\yone \triangleright_x \ythree$ and $\ytwo \triangleright_x \yfour$. This is the most we can get because if $p(\yone \succ \ythree |x) = p(\ytwo \succ \yfour |x) = 1$, then we already have $p((\yone \ytwo)_\alpha \succ (\ythree \yfour)_\alpha |x) \geq \alpha^2 + (1-\alpha)^2 \geq 1/2$ so we cannot enforce either $\yone \triangleright_x \yfour$ or $\ytwo \triangleright_x \ythree$. \\

To prove \eqref{eqEEx2}, we just have to remark that $\alpha^2 + 2 \alpha(1-\alpha) + (1-\alpha)^2 = 1$. We then prove \eqref{eqEEx} and note that $1 = p((\yone \ytwo)_\alpha \succ (\ythree \yfour)_\alpha |x) + p((\ythree \yfour)_\alpha  \succ (\yone \ytwo)_\alpha|x)$ is equivalent to, after decomposition using \eqref{debreuL},
    \begin{eqnarray*}
      1 & = & \alpha^2 (p(\yone \succ \ythree |x) + p(\ythree \succ \yone |x)) + \alpha(1-\alpha) (p(\yone \succ \yfour |x) + p(\yfour \succ \yone |x))\\
      &  & +  \alpha(1-\alpha) (p(\ytwo \succ \ythree |x) + p(\ythree \succ \ytwo |x)) +  (1-\alpha)^2 (p(\ytwo \succ \yfour |x) + p(\yfour \succ \ytwo |x)),
    \end{eqnarray*}
    which, since $\alpha^2 + 2 \alpha(1-\alpha) + (1-\alpha)^2 = 1$, leads to the statement. We tackle the last statement, if among the four elements in \eqref{eqEEx2}, three are identical. We see from \eqref{defP1234} two probabilities are identical (say $p$) and two are $1/2$ from assumption Bearability; furthermore the coefficient of the probability $p$ is $\alpha$ and that of $1/2$ is $1-\alpha$, or we permute those, which, if we look at inequality $p((\yone \ytwo)_\alpha \succ (\ythree \yfour)_\alpha |x) \geq 1/2$, ends up simplifying the inequality as $p\geq 1/2$.
  \end{proof}
  \paragraph{Specialization of Bearability} It follows by considering all lotteries of the form $L \defeq (yy)_\alpha$ for $y\in \mathcal{Y}$: $(L,L) \in \mathcal{E}^\alpha_x$ implies after using the property of lotteries \eqref{debreuL} $1 \defeq p(L\succ L |x) +p(L\succ L |x) = (\alpha^2 + 2\alpha(1-\alpha)+ (1-\alpha)^2)\cdot (p(y\succ y |x) + p(y\succ y |x)) = p(y\succ y |x) + p(y\succ y |x)$, hence $(y,y) \in \mathcal{E}_x$ and bearability holds for $G_x$. Hence, Bearability holds on alternatives / actions as well.

\paragraph{Specialization of ZA$\wedge$P$\Rightarrow$ZA} Take $\yone, \ytwo, \ythree \in \mathcal{Y}$, with (i) $\{\ytwo, \yone, \ythree\}$ a wedge in $\mathcal{E}_x$ and (ii) $(\yone \triangleright_x \ytwo)\wedge (\yone \triangleright_x \ythree)$. Create three lotteries for any fixed $\alpha \in (0,1)$: $L_1 \defeq (\yone \yone)_\alpha$, $L_2 \defeq (\yone \ytwo)_\alpha$ and $L_3 \defeq (\yone \ythree)_\alpha$. Point (II) in Lemma \ref{lemDECOMP} and (i) yield immediately that $L_2, L_1, L_3$ is a wedge in $\mathcal{E}_x^\alpha$. We also remark
  \begin{eqnarray*}
    p(L_1 \succ L_2 | x ) & \defeq & p((\yone \yone)_\alpha \succ (\yone \ytwo)_\alpha | x),
  \end{eqnarray*}
  so the last point of Lemma and the fact that $\yone \triangleright_x \ytwo$ imply $L_1 \triangleright_x L_2$. Similarly, replacing $L_2$ by $L_3$ yields $L_1 \triangleright_x L_3$, and we conclude from assumption ZA$\wedge$P$\Rightarrow$ZA on lotteries that $(L_2, L_3) \in \mathcal{E}_x^\alpha$, which, from the first point in Lemma \ref{lemDECOMP}, yields $(\ytwo, \ythree) \in \mathcal{E}_x$. We would have arrived at the same conclusion if we had replaced (ii) by $(\ytwo \triangleright_x \yone)\wedge (\ythree \triangleright_x \yone)$. Hence, ZA$\wedge$P$\Rightarrow$ZA holds on on alternatives / actions as well.

  \paragraph{Specialization of P$\Rightarrow$ZA$\wedge$P} Take any $y, y' \in \mathcal{Y}$ such that $y \triangleright_x y'$. It comes $L \triangleright_x L'$ for $L \defeq (yy)_\alpha$ and $L' \defeq (y'y')_\alpha$. We show the result by induction on $n$. If $n=1$, then we get $L \triangleright_x L'$ (which we knew) and $(L,L') \in \mathcal{E}^\alpha_x$, which implies $(y,y') \in \mathcal{E}_x$ from point (II) of Lemma \ref{lemDECOMP}, and concludes the specialization proof for $n=1$. Suppose the property true for any $n\geq 1$ and we show it for $n+1$. Consider the path ${\color{red} ((yy)_\alpha = L_0, L_1), (L_1, L_2), ..., (L_{n-1},L_n )}, (L_{n},L_{n+1} = (y'y')_\alpha)$ in $\mathcal{E}^\alpha_x$ and such that $L_i \triangleright_x L_{i+1}, \forall i \in [n]$. Denote for short $L_i \defeq (y_{i_1}, y_{i_2})$ ($y_{0_1} = y_{0_2} = y$ with a slight abuse of notation). We now exploit the induction hypothesis on the red part of the path: point (I) in Lemma \ref{lemDECOMP} yields
  \begin{eqnarray}
(y_{i_1} \triangleright_x y_{(i+1)_1}) \wedge (y_{i_2} \triangleright_x y_{(i+1)_2}), \forall i = 0, 1, ..., n-1, \label{specPR}
  \end{eqnarray}
  And point (II) in Lemma \ref{lemDECOMP} yields
  \begin{eqnarray}
( (y_{i_1} ,  y_{(i+1)_1}) \in \mathcal{E}_x) \wedge ( (y_{i_2} ,  y_{(i+1)_2}) \in \mathcal{E}_x) , \forall i = 0, 1, ..., n-1. \label{specZA}
  \end{eqnarray}
  The last element of the path, $(L_{n} = (y_{n_1}y_{n_2})_\alpha ,L_{n+1} = (y'y')_\alpha) \in \mathcal{E}^\alpha_x$ with $(y_{n_1}y_{n_2})_\alpha \triangleright_x  (y'y')_\alpha$ yields respectively $( (y_{n_1} ,  y') \in \mathcal{E}_x) \wedge ( (y_{n_2} ,  y') \in \mathcal{E}_x)$ (point (II) Lemma \ref{lemDECOMP}) and $(y_{n_1} \triangleright_x y') \wedge (y_{n_2} \triangleright_x y')$ (point (I) Lemma \ref{lemDECOMP}), which, along with \eqref{specPR} and \eqref{specZA}, allows to conclude on the specialization of P$\Rightarrow$ZA$\wedge$P. Hence, P$\Rightarrow$ZA$\wedge$P holds on on alternatives / actions as well.

  \paragraph{Specialization of Monotonicity} This one is even simpler: suppose monotonicity holds for \textbf{some} $\alpha \in (0,1)$ and for any $\yone, \ytwo, \ythree, \yfour, \yfive, \ysix$ in $\mathcal{Y}$, build lotteries $\lone \defeq (\yone, \yone)_\alpha$, $\ltwo \defeq (\ytwo, \ytwo)_\alpha$, and so on until $\lsix \defeq (\ysix, \ysix)_\alpha$. Using the fact that, by definition and for example
  \begin{eqnarray*}
    p(\lone \succ \ltwo | x) & = & (\alpha^2 + 2\alpha(1-\alpha)+ (1-\alpha)^2)\cdot p(\yone\succ \ytwo |x) = p(\yone\succ \ytwo |x), 
  \end{eqnarray*}
  we easily get (iii) and (iv) valid over the corresponding alternatives / actions as well. Point (ii) in Lemma \ref{lemDECOMP} allows to conclude that points (i) (triangle) and (ii) (wedge) of monotonicity also transfers on to alternatives / actions.\\

At this point, we have completed the first step of our proof. We go on to the second step, showing first that expandability implies the following result on $\mathcal{Y}$: suppose $p_1 \defeq p({\color{red} y_1}\succ y|x) < p({\color{orange} y_2} \succ y|x) \defeq p_2$. Then for any $q \in (p_1, p_2)$, if we fix
  \begin{eqnarray}
    \alpha & = & \frac{q - p_1}{p_2 - p_1} \quad \left(\alpha \in (0,1)\right),
  \end{eqnarray}
  then the lottery $({\color{orange} y_2}{\color{red} y_1})_{\alpha}$ satisfies $p\left(({\color{orange} y_2}{\color{red} y_1})_{\alpha}\succ y|x\right) = q$ from \eqref{debreuL}. This shows that \citet[Chapter 17, Theorem 2]{klstFO}'s solvability axiom (3) is satisfied. Our definition of local choice structure implies their axioms (1) and (4) and their monotonicity axiom is ours and so their Theorem 2 implies the existence of $u(x,y) : \mathcal{X} \times \mathcal{Y} \rightarrow \mathbb{R}$ such that for any ${\color{red} y_1}, {\color{orange} y_2}, {\color{darkgreen} y_3},  {\color{violet} y_4} \in \mathcal{Y}$ and $x \in \mathcal{X}$,
  \begin{eqnarray}
    p({\color{red} y_1}\succ {\color{orange} y_2}|x) \leq p({\color{darkgreen} y_3}\succ {\color{violet} y_4}|x) & \Leftrightarrow & u({\color{red} y_1}, x) - u({\color{orange} y_2},x) \leq  u({\color{darkgreen} y_3}, x) - u({\color{violet} y_4},x) \label{eqDIFF1}
  \end{eqnarray}
  (and furthermore equality can only happen simultaneously on both sides). Consider the 2D plot giving $p(y \succ y' | x)$ as a function of $u(y,x) - u(y',x)$. \eqref{eqDIFF1} implies that it can be intrapolated by a strictly increasing function $F$. We also have $F(u(y,x) - u(y',x)) + F(u(y',x) - u(y,x)) = p(y \succ y' | x) + p(y' \succ y | x) \leq 1$ so $F(z) + F(-z) \leq 1$. This ends the proof of Theorem \ref{Th-KLST-Ext}.

  \begin{remark}\label{rem-LCS}
    The crux of our proof consists in bypassing the use of the solvability axiom of the original proof \citep[Chapter 17, Theorem 2]{klstFO} -- which is hard to motivate in the context of preference optimization -- by creating a structure on top of alternatives / actions which can be (i) motivated in the context of preference optimization and (ii) compatible with the sampling process of preferences. We have resorted to simple -- binary -- lotteries, which have been motivated in a broad social choice context for decades \citep{Machina85}. Other structures could be eligible. The major "modelling" price to pay is the definition of a local choice structure (LCS) that relies on properties that need to hold for all non-trivial $\alpha$s. This is a stark contrast with monotonicity, which just needs to hold for \textbf{one}. We could have alleviated considering all of $(0,1)$ for the LCS by considering two open subsets of $[0,1]$, one close to 0 and one close to 1. The formulation of a LCS we adopted is more restrictive but more readable, so we kept it.
\end{remark}

  \subsection{Proof of Theorem \ref{Th-compo-conn}}\label{proof-Th-compo-conn}

The proof is obtained from the proof of two key results, the first of which is as follows (We remove the tilda notations to simplify readability since there is no misleading risk, e.g. with the loss definitions of step 1 in DPO or function $F$ in the \klst~structure).
    \begin{theorem}\label{phippo-si}
      For any function $\phi$ satisfying (i) $\mathrm{dom}\phi \supseteq [0,1]$, (ii) $\phi$ is strictly convex, the following loss $(\ell_0, \ell_1)$ is strictly proper 
      \begin{eqnarray}
        \partialloss{1}(p) \defeq -\phi(p) - (1-p) H(p) & , & \partialloss{0}(p) \defeq -\phi(p) + p H(p), \label{part01-si}
      \end{eqnarray}
      for any $H \in \partial \phi$. Furthermore if $|\phi(0)| \ll \infty, |\phi(1)| \ll \infty$, then we have the relationship $\partialloss{v}(p) = D_{\phi,H}(v\| p) - \phi(v), \quad \forall v \in \{0,1\}$. Finally, if we add the constraint (iii) $\phi$ is symmetric around $x=1/2$, then the loss as defined is symmetric.
    \end{theorem} 
\begin{proof}
 We want $-\phi(p) = p \cdot \ell_1(p) + (1-p) \cdot \ell_0(p)$ for some strictly proper loss $(\ell_0, \ell_1)$, that we assume wlog continuous on $(0,1)$ (because $\phi$ is necessarily continuous on $(0,1)$, being convex). We get the necessary relationship for $p< 1$:
 \begin{eqnarray}
\ell_0(p) & = & - \frac{\phi(p) + p\ell_1(p)}{1-p}.\label{defL0}
\end{eqnarray}
Picking any $H \in \partial \phi$, the subdifferential of $L(p,q)$ at $q$, $\partial_q L(p,q)$, contains the following selections, using \eqref{defL0} to get rid of $\ell_0$:
\begin{eqnarray*}
\partial_q L(p,q) & \ni & p J (q) - \frac{1-p}{(1-q)^2}\cdot \left( (H(q) + \ell(q)+q J(q))\cdot(1-q) + \phi(q) + q\ell(q) \right), 
  \end{eqnarray*}
  where $J$ is the derivative of $\ell_1$ where it is differentiable, and otherwise an element of the sub/super differential if it is not (a real in the interval defined by the left and right derivatives, guaranteed to exist because $\ell_1$ is continuous). The strict properness condition, that $\{0\} = \partial_q L(p,q)|_{p=q}$, imposes thus
  \begin{eqnarray}
p J (p) - \frac{1-p}{(1-p)^2}\cdot \left( (H(p) + \ell_1(p)+p J(p))\cdot(1-p) + \phi(p) + p\ell_1(p) \right) & = & 0,
  \end{eqnarray}
  which simplifies in $\ell_1(p) = - \phi(p) - (1-p) H(p) = D_{\phi,H}(1\|p) - \phi(1)$ (if $\phi(1)$ is finite), as claimed. Using \eqref{defL0}, we get
   \begin{eqnarray*}
     \ell_0(p) & = & - \frac{\phi(p) + p\cdot (- \phi(p) - (1-p) H(p))}{1-p}\\
     & = & - \phi(p) + p H(p),
   \end{eqnarray*}
   as claimed. The proof of Theorem \ref{phippo-si} ends once we remind that a symmetric proper loss satisfies $\ell_0(p) = \ell_1(1-p)$ \citep{nnOT}.
\end{proof}
The proof of Theorem \ref{Th-compo-conn} relies on the following Lemma and second key result.
   \begin{lemma}\label{lemAnyL}
     For any function $\ell: [0,1] \rightarrow \overline{\mathbb{R}}$ strictly increasing, there exists $\partialloss{1}: [0,1] \rightarrow \overline{\mathbb{R}}$ such that $(\ell_0 \defeq \ell, \ell_1)$ is strictly proper. More specifically, we can pick, for any constants $K\in \mathbb{R}, a \in [0,1]$,
     \begin{eqnarray*}
       (\ell_0, \ell_1)(p) & = & \left(\ell(p), K - \left( \int_a^p \frac{\ell(t)}{t^2} \mathrm{d}t  + \frac{1-p}{p} \cdot \ell(p)\right)\right)
     \end{eqnarray*}
\end{lemma}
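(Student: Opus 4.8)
The plan is to bypass any smoothness assumption on $\ell$ and verify strict properness of $(\ell_0,\ell_1)$ directly from the explicit formula, using only that $\ell$ is strictly increasing (hence monotone, hence locally integrable on the interior, so that $m(q) \defeq \int_a^q \ell(t)/t^2\,\mathrm{d}t$ is well defined). First I would substitute the candidate $\ell_1$ into $L(p,q) = p\,\ell_1(q) + (1-p)\,\ell_0(q)$ and collect terms. The factor multiplying $\ell(q)$ telescopes, since $(1-p) - p(1-q)/q = (q-p)/q$, so one lands on the compact expression $L(p,q) = pK - p\,m(q) + (q-p)\ell(q)/q$, and in particular $L(p,p) = pK - p\,m(p)$.

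Subtracting yields the quantity to control, $\Delta(p,q) \defeq L(p,q)-L(p,p) = -p\int_p^q \ell(t)/t^2\,\mathrm{d}t + (q-p)\ell(q)/q$, and the whole argument reduces to showing $\Delta(p,q) > 0$ whenever $q \neq p$. This is the crux, and it is where strict monotonicity of $\ell$ does the work. For $q>p$ and any $t \in (p,q)$ one has $\ell(t) < \ell(q)$, so dividing by $t^2>0$ and integrating over $[p,q]$ (a positive-measure set, giving a \emph{strict} inequality) gives $\int_p^q \ell(t)/t^2\,\mathrm{d}t < \ell(q)\int_p^q t^{-2}\,\mathrm{d}t = \ell(q)(q-p)/(pq)$; inserting this bound into $\Delta$ makes the two copies of $\ell(q)(q-p)/q$ cancel and leaves $\Delta(p,q)>0$. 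For $q<p$ the same comparison runs with the inequality reversed (now $\ell(t)>\ell(q)$ on $(q,p)$, and $\int_p^q$ carries an overall sign flip), and the cancellation again leaves $\Delta(p,q)>0$. Together these two cases establish strict properness.

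I would then record the structural reason behind the shape of the formula, which also certifies consistency with \eqref{part01} and Theorem \ref{phippo}: the defining relation $\ell_0(p) = -\phi(p) + p\,H(p)$ with $H = \phi'$ is, for a prescribed $\ell_0 = \ell$, a first-order linear ODE $p\,\phi'(p) - \phi(p) = \ell(p)$, equivalently $\frac{\mathrm{d}}{\mathrm{d}p}\bigl(\phi(p)/p\bigr) = \ell(p)/p^2$. Integrating explains the appearance of $\int \ell(t)/t^2\,\mathrm{d}t$ and the two constants $a,K$, and feeding the resulting $\phi$ back through $\ell_1 = -\phi - (1-p)H$ reproduces the stated $\ell_1$. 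Moreover, $\ell$ strictly increasing is exactly what forces $H = \phi'$ strictly increasing, i.e. $\phi$ strictly convex, matching the hypothesis under which \eqref{part01} is strictly proper; this gives a second, ODE-based route to the same conclusion.

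The main obstacle I anticipate is not the algebra but the analytic bookkeeping at the boundary: the weight $1/t^2$ is singular at $t=0$ and $\ell$ may take values in $\overline{\mathbb{R}}$ at the endpoints, so I would confine the verification to the open interior where $\ell$ is finite and the integrals converge, and recover the endpoints by monotone limits. Care is also needed to keep the integral comparison \emph{strict} (it relies on $\ell(t)<\ell(q)$ holding on a positive-measure subset, which strict monotonicity guarantees) rather than merely weak.
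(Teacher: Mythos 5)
Your proof is correct, and its core differs genuinely from the paper's. The paper proceeds constructively in the direction $\ell \rightarrow \phi \rightarrow \ell_1$: it reads \eqref{part01} as the first-order linear ODE $p\,\phi'(p)-\phi(p)=\ell(p)$ (your ``second route''), solves it to get $\phi(p)=p\,(\int_a^p \ell(t)/t^2\,\mathrm{d}t - K)$, checks that $\phi$ is strictly convex, and then delegates strict properness entirely to Theorem \ref{phippo}. You instead verify strict properness directly from the closed form: the telescoping identity $(1-p)-p(1-q)/q=(q-p)/q$ collapses $L(p,q)$ to $pK - p\,m(q) + (q-p)\ell(q)/q$ with $m(q)\defeq\int_a^q \ell(t)/t^2\,\mathrm{d}t$, and the strict pointwise comparison $\ell(t)<\ell(q)$ for $t<q$ (resp.\ $>$ for $t>q$) integrated against $t^{-2}$ produces exactly the cancellation needed for $L(p,q)-L(p,p)>0$ when $q\neq p$; I checked the algebra in both cases and it is sound, including for $\ell(q)<0$ since the comparison is sign-independent. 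What each buys: the paper's route explains \emph{why} the formula has this shape and reuses already-proved machinery, but its strict-convexity check is terse --- the footnote covers only differentiable $\ell$, and the general case is merely asserted; your direct argument is self-contained, needs neither Theorem \ref{phippo} nor any convexity verification, and handles non-differentiable $\ell$ uniformly (indeed, the same integral comparison shows $H(p)=m(p)+\ell(p)/p-K$ is strictly increasing in general, via $H(p)-H(q)>(\ell(p)-\ell(q))/p>0$ for $p>q$, which retroactively tightens the paper's convexity claim). Your flagged boundary cautions are the right ones and match the paper's implicit conventions: $\overline{\mathbb{R}}$ values at the endpoints and possible divergence of $m$ when $a=0$ are harmless because the properness inequality is trivially satisfied when $L(p,q)=+\infty$, and the interior verification plus monotone limits recovers $p,q\in\{0,1\}$.
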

\begin{proof}
  Our proof is indeed constructive: we build $\partialloss{1}$ via \eqref{part01-si} by building
  \begin{eqnarray}
    \ell \rightarrow \phi \rightarrow \partialloss{1} \label{eqtransfer}
  \end{eqnarray}
We start by the left $\rightarrow$. Knowing $\ell$, $\phi$ in \eqref{part01-si} is the solution of an ODE
   \begin{eqnarray*}
     y - py' & = & \ell
   \end{eqnarray*}
   ($\phi = -y$), from which we get the general solution ($K$ is a constant)
   \begin{eqnarray*}
     y & = & K \exp\left(-\int \frac{1}{-t} \mathrm{d}t\right) + \exp\left(-\int \frac{1}{-t} \mathrm{d}t\right) \cdot \int \frac{\ell(t)}{-t} \cdot \exp\left(\int \frac{1}{-t} \mathrm{d}t\right) \mathrm{d}t\\
     & = & K\cdot p - p \cdot \int \frac{\ell(t)}{t^2} \mathrm{d}t.
   \end{eqnarray*}
   and we check that ($a \in [0,1]$)
\begin{eqnarray*}
\phi(p) & = &p \cdot \left( \int_a^p \frac{\ell(t)}{t^2} \mathrm{d}t -  K \right)
\end{eqnarray*}
is indeed strictly convex \footnote{If $\ell$ is differentiable, we get the simple expression $\phi''(p) = \ell'(p)/p > 0$ because $\ell$ is strictly increasing.}. We then compute
\begin{eqnarray*}
  H(p) & \defeq & \phi'(p)\\
  & = & \int_a^p \frac{\ell(t)}{t^2} \mathrm{d}t + \frac{\ell(p)}{p} - K 
\end{eqnarray*}
and, given that $\partialloss{0}(p) \defeq \ell(p)$, we can complete \eqref{eqtransfer} and get from \eqref{part01-si}
   \begin{eqnarray*}
     \partialloss{1}(p) & \defeq & -\phi(p) - (1-p) H(p) \\
                        & = & -p \cdot \left( \int_a^p \frac{\ell(t)}{t^2} \mathrm{d}t -  K \right) - (1-p) \cdot \left(  \int_a^p \frac{\ell(t)}{t^2} \mathrm{d}t + \frac{\ell(p)}{p} - K \right)\\
      & = & K - \left( \int_a^p \frac{\ell(t)}{t^2} \mathrm{d}t  + \frac{1-p}{p} \cdot \ell(p)\right).
   \end{eqnarray*}
   This ends the proof of the Lemma.
     \end{proof}
   Remark that if $\ell$ is differentiable then $\partialloss{1}$ is also differentiable and we get
   \begin{eqnarray*}
\partialloss{1}'(p) & = & - \frac{1-p}{p} \cdot \ell'(p),
   \end{eqnarray*}
   which in addition to proving that $\partialloss{1}$ is strictly decreasing also checks the properness condition when $(\partialloss{0} \defeq \ell, \partialloss{1})$ is differentiable \citep[Theorem 1]{rwCB}. Now, $F : \mathbb{R} \rightarrow [0,1]$ being strictly increasing, $F^{-1} : [0,1] \rightarrow \mathbb{R}$ is also strictly increasing, so $\psi \circ F^{-1}$ is strictly increasing, and hence from Lemma \ref{lemAnyL} there exists a strictly proper loss $(\partialloss{0}, \partialloss{1})$ such that $\partialloss{0} = \psi \circ F^{-1}$, or equivalently
   \begin{eqnarray*}
     \psi & = & \partialloss{0} \circ F,
   \end{eqnarray*}
   as claimed. To prove the Theorem, we just have to remark that $\ell \defeq \psi \circ F^{-1}$ is a function $[0,1] \rightarrow \mathbb{R}$ and strictly increasing, being the composition of two strictly increasing functions, which concludes the proof of Theorem \ref{Th-compo-conn} using Lemma \ref{lemAnyL}.

  \subsection{Proof of Theorem \ref{convConj}}\label{proof-convConj}

(We remove the tilda notations to simplify readability)  Let us first show that
  \begin{eqnarray*}
\ell_0(p) - \ell_1(p) & \in & \partial \phi(p).
  \end{eqnarray*}
  This is equivalent to show that for any $q,p$,
  \begin{eqnarray*}
\phi(q) - \phi(p) - (q-p) \cdot(\ell_0(p) - \ell_1(p)) & \geq & 0.
  \end{eqnarray*}
  Using $\phi(v) \defeq -v \ell_1(v) - (1-v)\ell_0(v)$, this turns into the inequality
  \begin{eqnarray*}
-q \ell_1(q) - (1-q)\ell_0(q) + \underbrace{p \ell_1(p) + (1-p)\ell_0(p) - (q-p) \cdot(\ell_0(p) - \ell_1(p))}_{=q \ell_1(p) + (1-q)\ell_0(p)} & \geq & 0,
  \end{eqnarray*}
  which becomes equivalently $q \ell_1(q) + (1-q)\ell_0(q) \leq q \ell_1(p) + (1-q)\ell_0(p)$, which is just the statement of properness for $\ell$ on $\Delta_2$ and thus shows $\ell_0(p) - \ell_1(p) \in \partial \phi(p)$. $\ell$ being strictly proper, $H(p)\defeq \ell_0(p) - \ell_1(p)$ is invertible so if we postulate $x = H(p)$, then necessarily
  \begin{eqnarray*}
    \ell_0(H^{-1}(x)) - \ell_1(H^{-1}(x)) & = & x.
  \end{eqnarray*}
  Suppose $p\neq 0$, multiply both sides by $H^{-1}(x)$ and reorganize:
  \begin{eqnarray*}
H^{-1}(x) \cdot \ell_0(H^{-1}(x)) & = & x \cdot H^{-1}(x) + H^{-1}(x) \cdot \ell_1(H^{-1}(x)).
  \end{eqnarray*}
  Add $\ell_0(H^{-1}(x))$ to both sides and reorganize:
  \begin{eqnarray}
    \ell_0(H^{-1}(x)) & = & x \cdot H^{-1}(x) + H^{-1}(x) \cdot \ell_1(H^{-1}(x)) + (1-H^{-1}(x))\cdot \ell_0(H^{-1}(x))\nonumber\\
    & = & x \cdot H^{-1}(x) - \phi(H^{-1}(x))\label{defXF}
  \end{eqnarray}
  (the last identity is the definition of $\phi$). Since $H \in \partial \phi(p)$, recovering $x = H(p)$ yields from the definition of the subdifferential that for any $t \in \mathrm{dom}\phi$,
  \begin{eqnarray*}
\phi(t) - \phi(p) - (t-p) H(p) & \geq & 0,
  \end{eqnarray*}
  which reformulates as $t'x - \phi(t') \leq x \cdot H^{-1}(x) - \phi(H^{-1}(x))$ ($\forall t'$), implying $x \cdot H^{-1}(x) - \phi(H^{-1}(x)) = \phi^\star(x)$ for the right-hand side in \eqref{defXF}, so we indeed get in the general case:
\begin{eqnarray*}
 \ell_0 \circ H^{-1}(x) & = & \phi^\star(x).
  \end{eqnarray*}  
Now, if the loss is symmetric, then it has $\ell_0(p) = \ell_1(1-p)$, inducing the symmetry $H(1-p) = -H(p)$, yielding
  \begin{eqnarray*}
\ell_0(1-H^{-1}(x)) & = & \ell_1 \circ H^{-1}(-x),
  \end{eqnarray*}
 and so $\ell \circ H^{-1}(x) = \phi^\star(-x)$ (with the convention $\ell \defeq \ell_1$), as claimed.

\subsection{Proof of Theorem \ref{noSep}}\label{proof-noSep}

With $\phi(\ve{p}) \defeq - \ve{p}^\top \ve{\ell}(\ve{p})$, the properness condition in $\ve{q}$ \eqref{savageL} is equivalent to having, $\forall \ve{p}, \ve{q} \in \Delta_n$,
\begin{eqnarray}
 \phi(\ve{p}) - \phi(\ve{q}) - (\ve{p}-\ve{q})^\top \cdot (-\ve{\ell}(\ve{q})) & \geq & 0. \label{defPHI}
\end{eqnarray}
For any set $\{\ve{p}_1, \ve{p}_2, ..., \ve{p}_k\}$ ($k>0$) and $\ve{r} \in \Delta_k$, if we let $\ve{q} \defeq \expect_{j\sim \ve{r}}[\ve{p}_j] \in \Delta_n$ and compute the expectation of the $k$ inequalities \eqref{defPHI}, we get
\begin{eqnarray*}
  \expect_{j\sim \ve{r}}[\phi(\ve{p}_j) - \phi(\ve{q}) - (\ve{p}_j-\ve{q})^\top \cdot (-\ve{\ell}(\ve{q}))] = \expect_{j\sim \ve{r}}[\phi(\ve{p}_j)] - \phi(\expect_{j\sim \ve{r}}[\ve{p}_j]) & \geq & 0,
\end{eqnarray*}
so $\phi$ has to be convex on $\Delta_n$. The target $\ve{p}$ has three or more non-zero probabilities in indexes $\mathbb{I}$. In this case, if we restrict $\ve{q}$ to have non-zero coordinates in the same indexes (allowed by the properness assumption), then any two coordinates of $\ve{q}$ on these indexes are allowed to take values in a non-empty 2-dimensional open subset of a vector space; this, along with convexity and \eqref{defPHI}, impose the subdifferential property $-\ve{\ell} \in \partial \phi$ (without loss of generality, we suppose vectors restricted to indexes whose coordinates in $\ve{p}$ are non zero) which, since $\phi$ is separable and of the form $\phi(\ve{p}) = - \ve{p}^\top \ve{\ell}(\ve{p})$, provides at least 3 properties:
\begin{eqnarray}
  -\ell_i + K & \in & \partial - z \ell_i(z), \forall i \in \mathbb{I}, \label{partialELL}
\end{eqnarray}
where $K$ is a constant. We note that for all these indexes
\begin{eqnarray}
\partial - z \ell_i(z) & = & \{-\ell_i(z) - z \cdot h_i(z)\}_{h_i \in H_i} \label{partialELL2},
\end{eqnarray}
where $H_i$ is the set of functions taking value $\ell'_i$ if it is differentiable or a real in the interval defined by the left and right derivatives if not. Hence \eqref{partialELL} and \eqref{partialELL2} yield:
\begin{eqnarray*}
  \exists K_1 \in \mathbb{R}: \forall i=1, 2, ..., n, \exists h_i \in H: h_i(z) = - K_1/z. 
\end{eqnarray*}
$\ell_i$ being continuous (otherwise $\phi$ is not continuous, hence not convex), this implies $\ell_i (z) = - K_1 \log (z) + K_2$ ($\forall i \in \mathbb{I}$) with constraints $K_2 \in \mathbb{R}$ and $K_1 > 0$ (otherwise $\phi$ is not convex).

 \subsection{Proof of Lemma \ref{lemPROPALL}}\label{proof-lemPROPALL}
Showing the Lemma is a matter of writing
\begin{eqnarray}
  L(\ve{p}, \ve{q}) & = & \sum_i p_i \ell_0 (q_i) + p_i \cdot \sum_{j\neq i} \ell_1 \left(1-q_j\right)\nonumber\\
                    & = & \sum_i p_i \ell_0 (q_i) + (1-p_i) \ell_1 \left(1-q_i\right) \nonumber\\
                    & \geq & \sum_i p_i \ell_0 (p_i) + (1-p_i) \ell_1 \left(1-p_i\right) \nonumber\\
  & & = L(\ve{p}, \ve{p}) \nonumber,
\end{eqnarray}
where we have used $n$ times the properness of $(\ell_0, \ell_1)$ in the inequality (strict properness would involve a strict inequality). The last equality demonstrates the simplicity of the final loss in \eqref{eqII2-1} ($L(\ve{p}, \ve{p}) = \sum_i p_i \ell_0 (p_i) + (1-p_i) \ell_1 \left(1-p_i\right) $), as exemplified by the binary entropy in ORPO \citep{DBLP:conf/emnlp/HongLT24}.

\subsection{Proof of Lemma \ref{lem-ext-prop}}\label{proof-lem-ext-prop}

Suppose $(\ell_0, \ell_1)$ is proper, i.e. $\forall p, q$ we have
\begin{eqnarray*}
(1-p) \ell_0(1-p) + p \ell_1(p) & \leq & (1-p) \ell_0(1-q) + p \ell_1(q) 
\end{eqnarray*}
Multiply both sides by $a > 0$, then add $c\cdot p$ and refactor, one gets:
\begin{eqnarray*}
(1-p) a \cdot \ell_0(1-p) + p (a \cdot \ell_1(p) + c) & \leq & (1-p) a \cdot \ell_0(1-q) + p (a \cdot \ell_1(q) + c), 
\end{eqnarray*}
i.e. $(\tilde{\ell}_0, \tilde{\ell}_1) \defeq (a \ell_0, a \ell_1 + c)$ is proper. The key quantities follow from standard convex optimization results, see e.g. \citet{bvCO}.

 \subsection{Proof of Lemma \ref{LemCORL}}\label{proof-LemCORL}

 The solution to \eqref{piloss-GGenPIJ} follows from a standard property of Bregman divergences (see e.g. \citep[Appendix]{nmSL}):
\begin{eqnarray}
\tilde{\pi}(y_{i}|x) & = & \phi'^{-1} \left(\frac{1}{n_i} \cdot \sum_k \phi' (\pi(y_{ik}|y_{i,<k}, x)) \right). \label{relPITILDE}
\end{eqnarray}
Remark the absence of Lagrange multipliers: such generalized means have the property that the solution $\tilde{\pi}(y_{i}|x)$ is between the min and max values of $\tilde {\pi}(y_{ik}|y_{i,<k}, x), j=1, 2, ...$, so we are guaranteed that $\tilde{\pi}(y_i|x) \in [0,1]$. The solution for the KL divergence is then immediate, the optimum being the geometric mean. For the Itakura-Saito (IS) divergence, we just have to rewrite the harmonic mean:
\begin{eqnarray*}
\tilde{\pi}(y_{i}|x)  = \frac{n_i}{\sum_k \frac{1}{\pi(y_{i,k} | y_{i,<k}, x)}} = \gamma_i \cdot \prod_k \pi(y_{i,k} | y_{i,<k}, x) = u(\pi(y_i |x)) \mbox{ for } u(z) \defeq z/\gamma_i,
\end{eqnarray*}
with $\gamma_i \defeq (1/n_i) \cdot \sum_{l} \prod_{k\neq l} \pi(y_{i,k} | y_{i,<k}, x)$ ($\in [0,1]$). It is hard to estimate the sum-product element but if we compute the $\alpha_i > 0$ such that
\begin{eqnarray*}
\gamma_i & = & \exp(-\alpha_i n_i),
\end{eqnarray*}
then we get $u(z) = z \cdot \exp(\alpha_i n_i)$, as claimed. To finish up with the existence of $\alpha_i$, we show a precise interval
\begin{eqnarray}
\alpha_i & \in & [\underline{\alpha}_i, \overline{\alpha}_i]. \label{eqINTER}
\end{eqnarray}
To get interval \eqref{eqINTER} one just has to remark for example that
\begin{eqnarray*}
\frac{1}{n_i} \cdot \sum_{l} \prod_{k\neq l} \pi(y_{i,k} | y_{i,<k}, x) & \leq & \frac{1}{n_i} \cdot n_i \cdot \max_k \pi(y_{i,k} | y_{i,<k}, x)^{n_i-1},
\end{eqnarray*}
which after simplifying and taking logs leads to
\begin{eqnarray*}
\gamma_i & \leq & \exp\left(-\log\left(\frac{1}{\max_k \pi(y_{i,k} | y_{i,<k}, x)}\right)\cdot (n_i - 1)\right)\\
& & = \exp(-\overline{\alpha}_i \cdot n_i)
\end{eqnarray*}
for 
\begin{eqnarray*}
\overline{\alpha}_i & \defeq & \frac{n_i-1}{n_i} \cdot \log\left(\frac{1}{\max_k \pi(y_{i,k} | y_{i,<k}, x)}\right).
\end{eqnarray*}
Similarly, we get the expression for $\underline{\alpha}_i$:
\begin{eqnarray*}
\underline{\alpha}_i & \defeq & \frac{n_i-1}{n_i} \cdot \log\left(\frac{1}{\min_k \pi(y_{i,k} | y_{i,<k}, x)}\right).
\end{eqnarray*}
This ends the proof of Lemma \ref{LemCORL}.
\begin{remark}
Note that $\underline{\alpha}_i, \overline{\alpha}_i$ depend on $n_i$, which may not be a desirable bias. To eliminate it, one just has to slightly correct the correction in \eqref{eqDEPIS} (main file) by the folllowing one:
\begin{eqnarray*}
\log \tilde{\pi}(y_i|x) & = & \log \pi(y_i |x) + \alpha_i (n_i-1).
\end{eqnarray*}
\end{remark}
%%%%%%%%%%%%%%%%%%%%%%%%%%%%%%%%%%%%%%%%%%%%%%%%%%%%%%%%%%%%%%%%%%%%%%%%%%%%%%%
%%%%%%%%%%%%%%%%%%%%%%%%%%%%%%%%%%%%%%%%%%%%%%%%%%%%%%%%%%%%%%%%%%%%%%%%%%%%%%%

\newpage
\section{Supplementary material on toy experiment}\label{sec-sup-expes}

\begin{figure*}
  \centering
  \includegraphics[trim=10bp 10bp 100bp 30bp,clip,width=0.4\columnwidth]{Figs/plot-deconvexified-2.png}
  \caption{Plots of $\psi_a$ \eqref{def-psia} for $a=3, 6, 10$ (bottom-most to top-most thick curves) and the exponential loss (thin black curve).}
    \label{fig:deconv}
  \end{figure*}

From the standpoint of the choice of a loss function, two objectives eventually competing can be followed. The first is opt for Lipschitzness (e.g. logistic loss). In this case, one usually favors statistical consistency, the control of the Rademacher complexity, etc., and so focuses on generalization \citep{DBLP:journals/jmlr/BartlettM02}. Alternatively, one can opt for strong convexity (e.g. exponential loss over a right-bounded domain). In this case, one usually favors large curvature for fast/optimal convergence rate, and so rather focuses on training \citep{DBLP:conf/icml/RakhlinSS12}. Our toy experiment, which is not meant in any way to be able to draw firm conclusions on any point discussed, consists in trying to balance both objectives via the tweaking of a loss which ends up being "far" from the DPO crowd because it is not convex anymore, yet fits under our normative umbrella.

Consider the exponential loss ($\psi(z) = \exp(-z)$). To balance the objectives above, we keep the exponential loss' strong convexity for margins ($z$ values) above a controllable threshold, and otherwise “bend it” further in the negatives to make it Lipschitz (while keeping its smoothness, convenient for optimization). Here is the resulting loss, non-convex, defined for $a \geq 2$:
\begin{eqnarray}
\psi_a(z) & = & 
\left\{
\begin{array}{ccl}
a - \frac{a^2}{4} \cdot \exp(z) & \mbox{ if } z < \log\left(\frac{2}{a}\right)\\
\exp(-z) & \multicolumn{2}{c}{\mbox{ otherwise}}
\end{array}
\right. \label{def-psia}
\end{eqnarray}
Figure \ref{fig:deconv} plots three examples of $\psi_a$ for several values of $a$.

We use model GEMMA2\_2B\_IT, gradient clipping with global norm of 1.0. The maximum input length is set to 512. We use 1 H100 to run the experiment. Training dataset is Ultrafeedback-binarized. The model is trained for 200 steps with a batch size of 8, AdamW optimizer, peak learning rate of 1e-5, with a cosine lr scheduler and LR warm up for the first 10% training steps. The tau in DPO is set to 0.1.
Eval config: model tested on Alpaca Eval v2. We use gemini-2.5-flash-lite as the rater.

\end{document}